\documentclass{article}

\PassOptionsToPackage{numbers}{natbib}
\usepackage[accepted]{icml2026} 

\usepackage{hyperref}
\usepackage{xcolor}
\definecolor{linkblue}{rgb}{0.0,0.0,0.55}

\hypersetup{
  colorlinks=true,
  linkcolor=linkblue,
  citecolor=linkblue,
  urlcolor=linkblue
}

\usepackage{url}
\usepackage{graphicx}
\usepackage{amsmath,amssymb,amsthm,bbm,bm}
\usepackage{mathrsfs}
\usepackage[british]{babel}
\usepackage{balance}
\usepackage[utf8]{inputenc}
\usepackage[T1]{fontenc}

\usepackage{booktabs}
\usepackage{caption}
\usepackage{comment}
\usepackage{dsfont}
\usepackage{enumitem}
\usepackage{graphbox}
\PassOptionsToPackage{hyphens}{url}
\usepackage{csquotes}
\usepackage{mathtools}

\usepackage{microtype}
\usepackage{multirow}
\usepackage{nicefrac}
\usepackage{physics}
\usepackage{subfigure}
\usepackage{wrapfig}

\usepackage{cleveref}
\usepackage{aliascnt}
\setcitestyle{url=false,doi=false,isbn=false}

\newcommand{\identity}{\text{Id}}
\newcommand{\stdGauss}{{\mathcal{N}\left(0,\identity\right)}}

\usepackage{amsmath,amsfonts,bm}

\def\eqref#1{equation~\ref{#1}}

\def\1{\bm{1}}

\DeclareMathAlphabet{\mathsfit}{\encodingdefault}{\sfdefault}{m}{sl}
\SetMathAlphabet{\mathsfit}{bold}{\encodingdefault}{\sfdefault}{bx}{n}

\newcommand{\E}{\mathbb{E}}

\newcommand{\R}{\mathbb{R}}

\def\NTK_alpha{\alpha_\text{NTK}}
\def\Q{\mathbb Q}

\def\E{\mathbb E}
\def\N{\mathbb N}
\def\R{\mathbb R}
\def\amult{{\boldsymbol{\alpha}}}
\def\bmult{{\boldsymbol{\beta}}}

\def\deriv{{\text{d}}}

\def\id{\mathbbm{1}}

\newcommand{\EE}{\mathbb{E}\,}

\newcommand{\reals}{\mathbb{R}}

\theoremstyle{plain}
\newtheorem{theorem}{Theorem}[section]

\newaliascnt{lemma}{theorem}
\newtheorem{lemma}[lemma]{Lemma}
\aliascntresetthe{lemma}
\crefname{lemma}{lemma}{lemmas}
\Crefname{lemma}{Lemma}{Lemmas}

\newaliascnt{proposition}{theorem}
\newtheorem{proposition}[proposition]{Proposition}
\aliascntresetthe{proposition}

\crefname{proposition}{proposition}{propositions}
\Crefname{proposition}{Proposition}{Propositions}

\newtheorem{corollary}[theorem]{Corollary}

\newaliascnt{definition}{theorem}
\theoremstyle{definition}
\newtheorem{definition}[definition]{Definition}
\aliascntresetthe{definition}
\crefname{definition}{definition}{definitions}
\Crefname{definition}{Definition}{Definitions}

\theoremstyle{remark}

\newaliascnt{assumption}{theorem}
\newtheorem{assumption}[assumption]{Assumption}
\aliascntresetthe{assumption}

\crefname{assumption}{assumption}{assumptions}
\Crefname{assumption}{Assumption}{Assumptions}

\def\amult{{\boldsymbol{\alpha}}}
\icmltitlerunning{A theory of learning data statistics in diffusion models}

\begin{document}

\twocolumn[
\icmltitle{A theory of learning data statistics in diffusion models, from easy to hard}

\begin{icmlauthorlist}
\icmlauthor{Lorenzo Bardone}{EPFL}
\icmlauthor{Claudia Merger}{sissa}
\icmlauthor{Sebastian Goldt}{sissa}
\end{icmlauthorlist}
\icmlaffiliation{sissa}{
Theoretical and Scientific Data Science Group,
SISSA,
Trieste,
Italy}
\icmlaffiliation{EPFL}{Statistical Physics of Computation Laboratory,
EPFL,
Lausanne,
Switzerland}
\icmlcorrespondingauthor{Lorenzo Bardone}{lorenzo.bardone@epfl.ch}

\vskip 0.3in
]
\printAffiliationsAndNotice{}

\begin{abstract}
    While diffusion models have emerged as a powerful class of generative models, their learning dynamics remain poorly understood.
We address this issue first by empirically showing that standard diffusion models trained on natural images exhibit a distributional simplicity bias, learning simple, pair-wise input statistics before specializing to higher-order correlations. We reproduce this behaviour in simple denoisers trained on a minimal data model, the mixed cumulant model, where we precisely control both pair-wise and higher-order correlations of the inputs. We identify a scalar invariant of the model that governs the sample complexity of learning pair-wise and higher-order correlations that we call the \emph{diffusion information exponent}, in analogy to related invariants in different learning paradigms.
Using this invariant, we prove that the denoiser learns simple, pair-wise statistics of the inputs at linear sample complexity, while more complex higher-order statistics, such as the fourth cumulant, require at least cubic sample complexity. We also prove that the sample complexity of learning the fourth cumulant is linear if pair-wise and higher-order statistics share a correlated latent structure.
Our work describes a key mechanism for how diffusion models can learn distributions of increasing complexity.
\end{abstract}

\section{Introduction}

Introduced only ten years ago, diffusion models \citep{sohldickstein2015deep,ho_denoising_2020,song_generative_2019} quickly reached state-of-the-art performance in generative modeling. Yet our theoretical understanding of why these models learn so efficiently remains limited compared to our understanding of neural networks in ``standard'' supervised learning. 

In supervised learning, a key result is that neural networks exhibit a \emph{simplicity bias} -- they first learn simpler features of their target before moving to more complex features. This effect can be seen both in time during training, or as a function of the training set size. Simplicity biases were shown first in the context of learning a target function over Gaussian inputs~\citep{saad1995a, saxe2014exact,
saxe2019mathematical, abbe2023sgd, dandi2024two, berthier2025learning}, in autoencoders~\citep{kogler2024compression}, and experimentally for image classification~\citep{kalimeris2019sgd}; a similar effect can also be seen in the kernel regime~\citep{farnia2018spectral, rahaman2019spectral}. More recently, a \emph{distributional simplicity bias}, whereby neural networks first rely on pair-wise input statistics before exploiting higher-order correlations was shown both theoretically and experimentally in image classification~\citep{ingrosso2022data, merger_learning_2023, refinetti2023neural, bardone_sliding_2024} and next-token prediction~\citep{rende2024distributional, belrose2024neural, favero_how_2025, garnier-brun2025how}.

Whether similar principles govern the \emph{distributional} learning dynamics of diffusion models remains an open question. Here, we show through a combination of careful experiments and a rigorous analysis of SGD dynamics in simplified models that denoising diffusion models exhibit a distributional simplicity bias.

\begin{figure*}[t!]
    \centering
    \includegraphics[width=\linewidth]{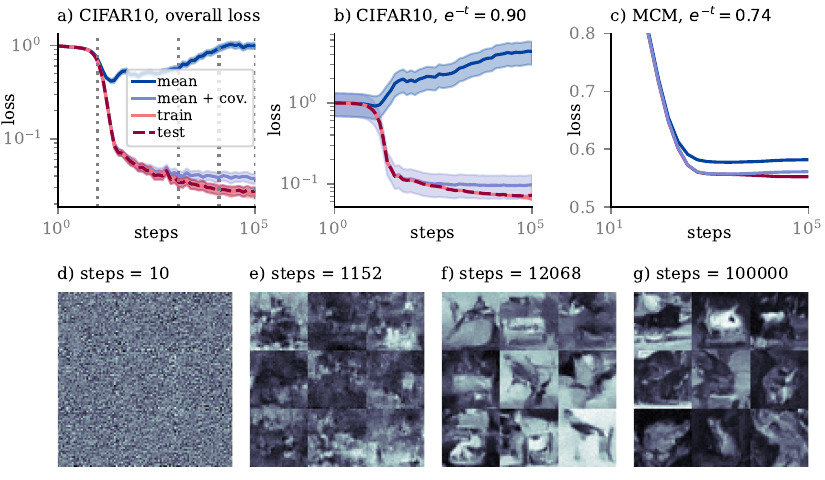}
    \caption{\label{fig:loss-traj} Sequential learning in diffusion models. a) Test loss of diffusion model and loss on CIFAR-10 clones during training. Vertical dotted lines mark training stages of images generated from the model shown in panels d)-g). All curves are averages over 3 initializations of the network models and $5\cdot10^3$ test data. Shaded areas report standard deviation over random initialization. Panel b) reports the same as a), but for denoising samples with fixed level of noise $x=e^{-t}x_0+\sqrt{1-e^{-2t}}z$ where $x_0$ is a data point and noise $z \sim \stdGauss$. c) Test loss of a neural network trained on the mixed cumulant model with dimension $d=10^2$ a fixed level of noise, evaluated on clones of the data set. All curves in panel c) are averages over $5 $ initializations of the network model and $10^4$ test data. d)-g) Samples generated from U-nets \cite{ronneberger_u-net_2015} on CIFAR-10 at various training stages.
   }
\end{figure*}

Our \textbf{main experimental contribution} is a demonstration of distributional simplicity bias in standard U-net based denoisers trained in a denoising diffusion paradigm on an image modeling task, see \cref{fig:loss-traj} and \cref{sec:experiment}. Specifically, we find for the first $\approx 10^3$ steps of training, a U-Net trained to denoise CIFAR10 images achieves the same test loss on the CIFAR10 test set as it does on samples from a Gaussian distribution that have the same mean and covariance as the CIFAR10 images. In other words, the denoiser only relies on pair-wise correlations between pixels to denoise images up to that point in training. Only after about $10^3$ steps of SGD, the network starts to exploit the higher-order correlations between pixels, which is evidenced by the lower loss of the denoiser on real images, where such correlations are present, than on the Gaussian surrogate model. We discuss this experiment in more detail in \cref{sec:experiment}.

How can we describe this distributional simplicity bias theoretically? While
several works have examined the dynamics of score denoising in simplified models
\citep{li2024understandinggeneralizabilitydiffusionmodels, bonnaire_why_2025,
 george_analysis_2025, merger_generalization_2025, wang2025analytical}, they either consider random feature denoisers, model data as drawn from a Gaussian distribution, or characterize an inductive bias toward Gaussian/linear denoisers during generalization \citep{li2024understandinggeneralizabilitydiffusionmodels}, and hence cannot account for genuine feature learning of higher-order input correlations. Feature learning in non-linear autoencoders, arguably the simplest class of denoiser neural networks, on a reconstruction task was analysed by \citet{refinetti2022dynamics, cui2023high}. More recently, \citet{cui2024analysis, cui2025a} extended this approach to analyse the learning dynamics of auto-encoders trained in a diffusion paradigm at linear sample complexity. Here, we instead characterise the learning dynamics in a solvable model of the learning dynamics of score diffusion by identifying a key invariant that governs the sample complexity at which a denoising diffusion model recovers information carried by pair-wise and higher-order correlations, going beyond linear sample complexity.

\noindent For our \textbf{main theoretical results},
\begin{enumerate}  
\item we prove nearly sharp thresholds for the number of samples required by a simple denoiser to learn from higher-order correlations (\cref{sec:diff_info_exp});
\item we rigorously establish the distributional simplicity bias of denoising diffusion models by proving a separation of timescales between learning pair-wise and higher-order correlations (\cref{sec:simplicity_bias});
\item we demonstrate that constraining SGD to the unit sphere, often considered a minor technical detail, is crucial for efficient learning in this model: unconstrained SGD can remain confined near the trivial solution, while spherical SGD exhibits successful learning dynamics, whose timescales we explicitly characterize (\cref{sec:importance_spherical_constraint}).
\item we analyse this effect in more detail in the special case where we choose the optimal denoiser (\cref{sec:matched}).
\end{enumerate}

Our results thus describe a key mechanism for how denoising diffusion models learn distributions of increasing complexity. On the technical level, we identify a scalar invariant of the loss that describes the initial stages of the learning dynamics, which we call the \textbf{diffusion information exponent~$k^*$} by analogy to similar invariants found for the single/multi-index problems like the information exponent \citep{benarous2021online}, which allows us to establish nearly sharp thresholds on the sample complexity of learning specific statistical features of the data.

\section{A distributional simplicity bias in denoising diffusion models}%
\label{sec:experiment}

Diffusion models are trained by adding increasing amounts of Gaussian i.i.d. noise to real data until the original data is no longer discernible from pure noise. A neural network is trained to reverse this process by predicting and removing the noise step by step. To learn to denoise data in this way, the model must make use of the statistics of the data it is trained on, i.e. it must infer the characteristic properties of the distribution of the data. After training, one starts from random noise and iteratively removes the noise predicted by the model to obtain new, realistic samples. 

To measure which statistics of the data the model is exploiting, we evaluate the model on several clone datasets, which share statistical properties of the real dataset during training~\citep{refinetti2023neural}. Here, our original dataset are grayscale CIFAR-10 images ~\citep{krizhevsky2009learning}, on which we train a U-net architecture \citep{ronneberger_u-net_2015} using denoising score matching \cite{song_generative_2019, ho_denoising_2020}. Each clone is a data set of inputs sampled from a Gaussian, whose mean (``mean'') or mean and covariance (``mean + cov'') have been fitted to the images in CIFAR-10. We detail their generation and show examples of the clone data sets in \cref{app:clones}.

The test loss measures the models' ability to predict the noise added to the images, both real images and those of the clones.  If the model has equal performance on the real data and a clone dataset, this means that, despite the real data having a far richer statistic than the clone, the model has not learnt to use those statistics to predict the noise yet. In this case, we choose the clones to reproduce statistics that can be inferred from the data directly (the mean and the covariance).

\Cref{fig:loss-traj}a) shows that as training progresses, the diffusion model specializes more and more: at first, the performance on all test sets (real data and clones) is equal. At later training stages, the models' performance on the more specialized clones improves, whereas its performance on less specialized clones stagnates, meaning that the model has learnt to exploit the statistics of the data that go beyond the one of the clones it outperforms. We show the samples obtained from these models at various training stages in \cref{fig:loss-traj} d)-g).  We repeat the same experiment with the CelebA data set and find the same sequential learning behaviour, see \cref{app:CelebA_experiment}. We report details on the training procedure in \ref{app:train_hyper}. This experiment substantiates the claim that lower order statistics (i.e. mean and covariance) are learnt in the initial phases of learning, whereas higher-order statistics are learnt later. In the following, we will introduce a model which allows us to explain this sequential learning property of the neural network from lower to higher order statistics.

\section{Setup for our theoretical analysis}

We will analyse the dynamics of projected stochastic gradient descent (pSGD) for a simple, non-linear denoiser trained on inputs sampled from a non-Gaussian distribution. We now describe in detail the diffusion paradigm we analyse, the input distribution we use, and the denoiser we will train.

\subsection{Denoising diffusion} 

We model the diffusion process following a standard approach, see for instance \cite{biroli_generative_2023}; all the details are in \cref{app:model}. The diffusion process is defined on a time interval $t\in[0,T]$, with $P_0$ the unknown distribution that we want to learn to sample from, and a distribution over latent variables $P_T\approx \mathcal N(0,\id_d)$. The dynamics are described by the following SDE:
\begin{equation} \label{eq:diffusion_process_main}
\deriv x(t)=-x\deriv t+\deriv \mathcal W_t,
\end{equation}
where $\mathcal W_t \in \R^d$ is a $d$ dimensional Wiener process. The solution at time $t$ can be written in distribution as:
\begin{equation}
    x(t) \overset{\mathscr D}{=} e^{-t}x(0) + \sqrt{\Delta_t} z
\end{equation}
where $x(0) \sim P_0$,  $z \sim \mathcal N(0,\id_d)$ and $\Delta_t=1-e^{-2t}$. The goal in diffusion models is learning the score of the density at intermediate times, which is given by
\begin{align}
        \label{eq:score}
    \mathcal F_i(x,t)&=\frac{\partial \log P_t(x)}{\partial x_i}\\&=-\frac{x_i-\EE[x_i(0)|x(t)=x]e^{-t}}{\Delta_t},
\end{align}

where the last equality, called \emph{Tweedie's formula}, is at the core of the feasibility of diffusion models. It gives a recipe on how to approximate the score via empirical averages of the noised process.

The objective then becomes learning $\mathcal F$, which allows one to realize the reverse SDE \cite{anderson_reverse-time_1982} and to generate new samples from pure noise. Here, we will not treat the generation process, but focus on learning the score from data. To do this, one usually uses a mean-square objective for a collection of fixed time intervals. Let us denote $\mathcal S_t^w(x)$ the approximated score that depends on weight $w$. The loss function for diffusion time $t$ can be rewritten, up to a constant, as 
\begin{equation} 
\label{eq:formula_loss_diff_useful} \mathcal{L}(w) =\\ \frac12\underset{\substack{
x_0 \sim \mathbb P_0 \\
z \sim \mathcal N(0,\id_d)
}}{\mathbb E}
\norm{S^w_t(x_0 e^{-t}+\sqrt{\Delta_t} z)+\frac{z}{\sqrt{\Delta_t}}}^2,
\end{equation}
see \cref{app:model} for a derivation.

The loss in \cref{eq:formula_loss_diff_useful} can be well approximated having just samples from $P_0$, which allows to estimate the integral over $x_0$.
The additional Gaussian integral over $z$  in \cref{eq:formula_loss_diff_useful} is a term that is peculiar to diffusion models. To perform the subsequent analysis, we must decompose the loss into Hermite polynomials. To this end, we use \emph{Stein's lemma} (\cref{lemma:Stein} in the appendix), following the approach of \cite{shah2023learningmixturesgaussiansusing}, to rewrite the loss and apply the Hermite decomposition.

\subsection{Input distribution}%
\label{sec:inputs}

We draw samples $x$ from the  \emph{mixed cumulant model} (MCM) of \citet{bardone_sliding_2024}. 
The idea of the mixed cumulant model is to generate inputs that appear isotropically Gaussian in all directions except along the two vectors, or ``spikes'', $u,v \in \reals^d$. This means that taking linear projections of inputs sampled from the mixed cumulant model along a fixed, random direction $w\in\reals^d$, $\lambda^\mu = w \cdot x^\mu$, results in random variables $\lambda^\mu$ that follow a standard normal distribution with high probability. However, there are two special directions that a generative model needs to learn: first, the covariance spike $u$, along which inputs are still normally distributed, but with higher variance, defined by the signal-to-noise ratio $\beta_u$. The second special direction is the cumulant spike $v$: projecting inputs along this cumulant spike yields a non-Gaussian distribution; by constraining the variance of this distribution to be equal to one, we ensure that the cumulant spike $v$ can only be discovered using higher-order statistics of the inputs, making it harder to detect.

We construct samples $x^{\mu}, \, \mu\in [n]$ of the mixed cumulant model thus:
\begin{equation}
  \label{eq:MCM}  x^{\mu}=\sqrt{\beta_u} \lambda^\mu u+\sqrt{\beta_v}\nu^\mu v+z^\mu
\end{equation}
where $\beta_u\in \R,\beta_v\in [0,1]$ are constant \emph{signal to noise ratios} that modulate the intensity of the signal, $\lambda^\mu\sim\mathcal N(0,1)$ and $\nu^\mu\sim \text{Rademacher}(1/2)$ are latent variables, and $z^\mu\sim \mathcal N(0,\id -\beta_v vv^\top)$ is high-dimensional noise.

To connect with natural data distributions, one can think of $u$ as a dominant Fourier mode capturing low-frequency structure with roughly Gaussian projections, and of $v$ as a localized Gabor-like filter that captures the leading higher order components of natural image statistics \citep{hyvarinen2009natural}.

\subsection{Denoiser and learning algorithm} 

We consider the simplest architecture that is able to learn the score of a MCM with one non-Gaussian spike (the explicit derivation of the target score is in appendix \ref{app:score}). The denoiser takes the form of a rank-1 non-linear autoencoder with a skip connection:
\begin{equation}
\label{eq:def_denoiser}S^w_t(x)=-x-\sigma(w\cdot x) w,
\end{equation}
where $w\in \mathbb S^{d-1}$ is the vector of trainable weights.
Autoencoders of this type have been studied before for reconstruction~\citep{refinetti2022dynamics, cui2025a,mendes2026solvablehighdimensionalmodelnonlinear} and for denoising diffusion~\citep{cui2025a}. 

To train the denoiser weight $w$, we update an initial weight $w_0 \sim \text{Unif}(\mathbb S^{d-1})$ via online projected SGD (pSGD):
\begin{equation} \label{eq:projected_SGD_def}
    \tilde w_{\tau+1} = w_{\tau}
                        -\eta_d \nabla_{\text{sph}}\mathscr L(w_\tau,x_\tau), \quad
    w_{\tau+1} = \frac{\tilde w_{\tau+1}}{\norm{\tilde w_{\tau+1}}}
\end{equation}
where $\mathscr L(w,x)$ is the sample-wise loss that substitutes the average over $\mathbb P_0$ in \cref{eq:formula_loss_diff_useful}  with the evaluation on a sample $x\sim \mathbb P_0$. $\nabla_{\text{sph}}$ is the spherical gradient: $\nabla_{\text{sph}}f(w)=(\id-ww^\top)\nabla f(w)$. 
We analyse projected SGD rather than standard SGD mainly because it improves the network's ability to learn the score in our controlled setting. We discuss how using non-projected SGD impacts learning in \cref{sec:importance_spherical_constraint}.

We simplify $\nabla_{\text{sph}} \mathscr L$, by expanding the square in \cref{eq:formula_loss_diff_useful}, differentiating with respect to $w$ and then applying Stein  \cref{lemma:Stein} to remove the integral over $z$.  At each training time $\tau$, $\mathscr L$ is computed on a new independent sample $x_\tau\sim \mathbb P_0$. We obtain a formula that depends only on samples of $x\sim \mathbb P_t$, not $z$:
\begin{equation}
    \label{eq:sample_wise_loss_def}
    \nabla_{\text{sph}} \mathscr L_t(w,x)=(\id_d-ww^\top)x F_\sigma(x\cdot w)
\end{equation}
where we have defined the \emph{effective nonlinearity} 
\begin{multline}
    \label{eq:Fdef}
    F_\sigma(x\cdot w):= \sigma''(x\cdot w)-\sigma'(x\cdot w)\sigma(x\cdot w)\\ -\sigma(x\cdot w)- \sigma'(x\cdot w)x\cdot w,
\end{multline} 
The detailed derivation is given in appendix \cref{app:computations_projected_SGD}.

\section{Theoretical analysis of score denoising}%
\label{sec:analysis}

Our goal is now to establish the sample complexities required to learn certain statistical structures of the inputs like the covariance and cumulant spikes $u$ and $v$ with the diffusion model. Our strategy will be to identify a scalar invariant that describes the initial stages of the
learning dynamics, which we call the \textbf{diffusion information
exponent~$k^*$} by analogy to similar invariants found for 
single/multi-index problems like the information exponent
\citep{benarous2021online}, generative exponent~\citep{pmlr-v247-damian24a}, or
the leap index~\citep{dandi2024two}. Here, we extend these ideas to the diffusion setting. The main technical difference compared to their
setting is that they assumed a Gaussian distribution over inputs with identity
covariance; here we analyse a non-isotropic, non-Gaussian input distribution,
using the methodology of \citet{bardone_sliding_2024}. 

To highlight the difficulty of learning higher-order correlations using denoising diffusion, we start by considering a setting where inputs only carry non-trivial structure in their higher-order correlations, meaning that the covariance signal-to-noise ratio is $\beta_u=0$. In this case, the only relevant order parameter to describe the learning dynamics is the overlap $\alpha=w\cdot v$ between the weight vector of the denoiser and the cumulant spike. This overlap is small at initialization, of order $\alpha=\Theta \left(\nicefrac{1}{\sqrt d}\right)$. We will say that the autoencoder has learnt the higher-order correlations if the autoencoder has ``weakly recovered'' the cumulant spike $v$, i.e.\ 
when the overlap $\alpha=w\cdot v \sim O(1)$. This transition from diminishing to macroscopic overlap $\alpha$ marks the exit of
the search phase of stochastic gradient descent, and it often
requires most of the runtime of online SGD~\citep{benarous2021online}.

\subsection{The diffusion information exponent determines the sample complexity for recovering data structure}%
\label{sec:diff_info_exp}

We now introduce the \emph{diffusion information exponent}, which determines the sample complexity required to learn correlations of different orders. We expand both the effective non-linearity $F_\sigma$ and the likelihood ratio
\[
L_t := \frac{\mathrm d 
P_t}{\mathrm d \mathcal N(0,\id_d)}
\]
in the Hermite basis (see \cref{def:herm exp}), with coefficients $(c_i^F)_{i\in\mathbb N}$ and $(c_j^L)_{j\in\mathbb N}$, respectively. The Hermite coefficients of $F_\sigma$ encode the properties of the loss function together with the chosen nonlinearity $\sigma$, while the Hermite coefficients of the likelihood ratio characterize the structure of the data distribution. The hardness of the inference task in the online regime is governed by how these two sequences of coefficients interact.

When the learning rate is sufficiently small, the noisy online dynamics are well approximated by the gradient flow of the population loss. Then the evolution of the overlap $\alpha_\tau$ is dominated by the leading non-vanishing contribution in the Hermite expansions, yielding
\[
\alpha_{\tau+1}
=
\alpha_\tau
+\eta_d\, c^L_{k^\star}\, c^F_{k^\star-1}\, \alpha_\tau^{\,k^\star-1}
+ O\!\left(\alpha_\tau^{\,k^\star}\right).
\]
We define $k^\star$, the \textbf{diffusion information exponent}, as the smallest integer $k$ such that the $k$-th Hermite coefficient of the likelihood ratio and the $(k-1)$-th Hermite coefficient of $F_\sigma$ are both non-zero. Intuitively, $k^\star$ identifies the lowest-order statistical feature of the data that is both present in the distribution and exploitable by the combination of the mean-squared error loss and the nonlinearity $\sigma$. As a consequence, starting from a random initialization, it takes on the order of $d^{\,k^\star-1}$ iterations—and hence samples, since we are in the online regime—to reach recovery of the spike $v$, as made precise in the propositions that follow.

\begin{assumption}[Essential] $\sigma$ and $\mathbb P$ are such that
 $\nabla_{\mathrm{sph}}\mathcal L(\alpha)$ is strictly negative for all $\alpha \in (0,1)$  \label{assumption:A}
\end{assumption}
\begin{assumption}[Technical]
Define
\[
H_d(x,w):= \mathscr L(x,w)-\mathcal L(w)
\] We assume the following estimates hold for some $C_1>0$, $\varepsilon>0$:
    \begin{align}
        \sup_{w\in \mathbb S^{d-1}}\EE\left[\left(\nabla_{\mathrm{sph}}H_d(x,w)\cdot v\right)^2\right]&\le C_1\\
        \sup_{w\in \mathbb S^{d-1}}\EE\left[\norm{\nabla_{\mathrm{sph}}H_d(x,w)}^{4+\varepsilon}\right]&\le C_1 d^{(4+\varepsilon)/2}
    \end{align}\label{assumption:B}
\end{assumption}
\begin{proposition}[Positive result]
\label{prop:single-index-positive}
Assume that $L_t(x\cdot v)$ is the likelihood ratio of a sub-Gaussian random variable, and $\sigma$ an activation function such that $F_\sigma$ satisfies \cref{assumption:A} and \cref{assumption:B}. Denote with $k^*$ the information exponent of the loss $\mathcal L$ and let $\hat n(d,k^*)$ be a sample complexity threshold defined as:
\[\begin{cases}
    \hat n(d,1)=\omega(d)\\
    \hat n(d,2)=\omega(d\log^2 d)\\
    \hat n(d,k)=\omega(d^{k-1}\log^2d) & k\ge 3
\end{cases}
\]
then the application of $\hat n(d,k^*)$ steps of projected gradient descent with step size $\eta_d$ satisfying \begin{equation}
    \frac{1}{\hat n}\ll \eta_d \ll \frac{1}{\sqrt{\hat n d}}
\end{equation}
starting from isotropic initialization $w\sim\text{Unif}(\mathbb S^{d-1})$ leads to:
\begin{equation}
    \lim_{d\to \infty}|v\cdot w(\hat n(d,k^*))|=1.
\end{equation}
Where the limit holds in probability and in $L^p$ for all $p\ge1$.
\end{proposition}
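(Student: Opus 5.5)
The plan is to reduce the statement to the online-SGD framework of \citet{benarous2021online} (in the form adapted to non-isotropic inputs by \citet{bardone_sliding_2024}), tracking the scalar overlap $\alpha_\tau = w_\tau\cdot v$.

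\textbf{Step 1: the population drift.} We first compute the population loss as a function of $\alpha$ only. Since $\beta_u=0$ and the law of $x$ is Gaussian outside the direction $v$, we have
\[
\mathcal L(w)=\phi(\alpha).
\]
Its spherical gradient projected on $v$ is $(1-\alpha^2)\,\phi'(\alpha)$. Using \cref{eq:sample_wise_loss_def}, write $\EE[x\,F_\sigma(x\cdot w)] = \EE_{\mathcal N}[x\,F_\sigma(x\cdot w)\,L_t(x\cdot v)]$ and expand both $F_\sigma$ and $L_t$ in Hermite polynomials. Gaussian integration by parts (Stein, \cref{lemma:Stein}) and the orthogonality relation $\EE[h_i(w\cdot z)h_j(v\cdot z)]=\delta_{ij}\alpha^i$ then give
\[
-\phi'(\alpha)=\sum_{k\ge k^*} c^L_k\, c^F_{k-1}\,\alpha^{k-1}
\]
up to normalising constants. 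This makes the leading term $c^L_{k^*}c^F_{k^*-1}\alpha^{k^*-1}$ explicit. Sub-Gaussianity of the latent variable, together with a growth condition on $F_\sigma$, ensures that the series converges on $[-1,1]$. \Cref{assumption:A} then says the drift has a fixed sign on $(0,1)$, and by symmetry we may assume $\alpha_0>0$, since $|v\cdot w|$ is the quantity of interest.

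\textbf{Step 2: the discrete recursion.} Write the recursion for $\alpha_\tau$ under pSGD:
\[
\alpha_{\tau+1}=\frac{\alpha_\tau-\eta_d\,\phi'(\alpha_\tau)(1-\alpha_\tau^2)-\eta_d\, M_\tau}{\norm{\tilde w_{\tau+1}}},
\]
where $M_\tau=\nabla_{\rm sph}H_d\cdot v$ is a martingale increment. Using $\nabla_{\rm sph}\perp w$, we get
\[
\norm{\tilde w}^2=1+\eta_d^2\norm{\nabla_{\rm sph}\mathscr L}^2 .
\]
By \cref{assumption:B}, the second moment of $\norm{\nabla_{\rm sph}\mathscr L}$ is $O(d)$. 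The normalisation therefore costs a relative factor of $1-O(\eta_d^2 d)$ per step, and summed over $\hat n$ steps this is $O(\eta_d^2 d\,\hat n)=o(1)$ by the upper bound $\eta_d\ll 1/\sqrt{\hat n d}$. The $4+\varepsilon$ moment is used to control the fluctuations of this correction uniformly in time (Doob plus a truncation argument). The martingale part $\eta_d\sum_\tau M_\tau$ has variance $\le C_1\eta_d^2\hat n$, so by Doob it is $o(\alpha_0)=o(d^{-1/2})$ uniformly over $\tau\le \hat n$, again because $\eta_d^2\hat n\ll 1/d$.

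\textbf{Step 3: comparison with an ODE.} On the event where the noise and normalisation corrections are small, $\alpha_\tau$ dominates the solution of the discrete ODE $\dot\alpha=\eta_d\, c\,\alpha^{k^*-1}$ started at $\alpha_0\asymp d^{-1/2}$. Anti-concentration of $w_0\cdot v$ gives $\alpha_0\ge d^{-1/2}/\log d$ with high probability, which produces the $\log$ factors. The escape time from $d^{-1/2}$ to $O(1)$ is then of order
\[
\eta_d^{-1}\,\alpha_0^{-(k^*-2)}\ \text{for } k^*\ge3,\qquad \eta_d^{-1}\log d\ \text{for } k^*=2,\qquad \eta_d^{-1}\ \text{for } k^*=1 .
\]
With $\eta_d$ chosen as large as allowed, this matches $\hat n(d,k^*)$. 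Once $\alpha$ is macroscopic, \cref{assumption:A} (strict sign on compacts of $(0,1)$) drives $\alpha$ to $1$ in $O(1/\eta_d)\ll\hat n$ further steps, and the lower bound $\eta_d\gg1/\hat n$ guarantees enough time. Convergence in $L^p$ then follows from convergence in probability, since $|\alpha|\le1$.

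\textbf{Main obstacle.} The hard part is Step 2 near $\alpha\approx d^{-1/2}$, where the signal per step ($\eta_d\alpha^{k^*-1}$) is much smaller than the per-step noise. One needs the cumulative martingale to be dominated by the cumulative drift over the whole escape window, not just at the end. I would handle this as \citet{benarous2021online} do: work on geometric scales $\alpha\in[2^j\alpha_0,2^{j+1}\alpha_0]$ and apply Doob's inequality on each scale with the scale-dependent time horizon. The delicate bookkeeping is exactly where the $k^*=2$ case picks up its $\log^2 d$ factor.
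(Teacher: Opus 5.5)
Your route is the paper's: obtain the drift along $v$ by Hermite-expanding $F_\sigma$ and $L_t$ and applying Stein's lemma, then run the online-SGD argument of \citet{benarous2021online} (their Theorems 1.3--1.4), which uses only the stochastic gradient. The paper simply cites that argument, and your Steps 2--3 are a reasonable outline of it. Four claims in your sketch are nevertheless not what the argument gives.

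\textbf{Step size.} For $k^*\ge 3$, escaping from $\alpha_0\asymp d^{-1/2}$ requires $\eta_d\hat n\gg d^{(k^*-2)/2}$; for $k^*=2$ it requires $\eta_d\hat n\gg\log d$. The hypothesis $\eta_d\gg 1/\hat n$ does not provide this and only pays for the second phase. Your phrase ``with $\eta_d$ chosen as large as allowed'' silently restricts to the top of the window, and that restriction cannot be removed. The cumulative noise is $o_P(d^{-1/2})$ and the normalisation only shrinks $|\alpha|$, so the iterate is dominated by gradient flow run for total time $\eta_d\hat n$. Near $\eta_d\approx 1/\hat n$ that time is far too short to leave the $d^{-1/2}$ scale. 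The stated window is therefore too wide, and you should state the extra condition explicitly. Combined with $\eta_d\ll(\hat n d)^{-1/2}$, it is exactly what forces $\hat n\gg d^{k^*-1}$.

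\textbf{Symmetry.} Reducing to $\alpha_0>0$ needs $L_t$ to be even. That holds for the Rademacher latent but not for a general sub-Gaussian one. Without it, \cref{assumption:A} says nothing on $(-1,0)$. For odd $k^*\ge 3$ the leading drift $\propto\alpha^{k^*-1}$ even pushes negative $\alpha$ back towards $0$, so that half of the initialisations fails. Either assume symmetry or condition on $\alpha_0>0$.

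\textbf{Logarithms.} The log factors do not come from anti-concentration. Since $\sqrt d\,\alpha_0\to\mathcal N(0,1)$, you have $|\alpha_0|\ge\epsilon_d d^{-1/2}$ w.h.p.\ for any $\epsilon_d\to 0$, and the sample budget then reads $\hat n\gg d^{k^*-1}\epsilon_d^{-2(k^*-2)}$. Your choice $\epsilon_d=1/\log d$ would cost $(\log d)^{2(k^*-2)}$, which already exceeds $\log^2 d$ at $k^*=4$, the case of interest. Let $\epsilon_d\to 0$ slowly instead.

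\textbf{Main obstacle.} The geometric-scale argument is unnecessary in this step-size regime. Your Step 2 already shows two things: the martingale stays below $\alpha_0/2$ uniformly on $[0,\hat n]$ with high probability, and the normalisation product is $1-o_P(1)$. Hence, up to the escape time, $\alpha_\tau\ge\tfrac12\alpha_0+c\,\eta_d\sum_{s<\tau}\alpha_s^{k^*-1}$, and a discrete Bihari--LaSalle comparison concludes. That the per-step noise exceeds the per-step drift is irrelevant once the cumulative noise is below $\alpha_0$.
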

Note that in the proposition we used the \emph{little omega} notation, see appendix \ref{app:notation}
\begin{proposition}[Negative result]
\label{prop:single-index-negative}
    In the setting of the previous propositions, if $n(d)=o(\hat n(d,k^*))$ and
    \[
    \eta_d=\begin{cases} O\left(\frac1d\right) & k^*=1,2\\
    \eta_d=O\left(\frac1{\sqrt{n(d)d}}\right) &k^*\ge 3
    \end{cases}
    \]
    the online SGD with learning rate $\eta_d$ will fail to reach weak recovery:
    \begin{equation}
       \lim_{d\to \infty} \sup_{\tau\le n(d)} |v\cdot w(\tau)|=0 
    \end{equation}
    where the limit is in probability and in $L^p$ for any $p\ge 1$.
\end{proposition}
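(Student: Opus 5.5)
We follow the Ben Arous--Gheissari--Jagannath template: reduce the dynamics to the scalar overlap $\alpha_\tau = v\cdot w_\tau$, decompose each increment into drift, martingale and curvature terms, and show that none of them can move $\alpha$ from $\Theta(d^{-1/2})$ to a constant within $n(d)=o(\hat n(d,k^*))$ steps.

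\textbf{Step 1: one-step expansion.} Write $g_\tau = \nabla_{\mathrm{sph}}\mathscr L(w_\tau,x_\tau) = \nabla_{\mathrm{sph}}\mathcal L(w_\tau) + \nabla_{\mathrm{sph}}H_d(x_\tau,w_\tau)$. Since $g_\tau\perp w_\tau$, we have $\|\tilde w_{\tau+1}\|^2 = 1+\eta_d^2\|g_\tau\|^2$. Expanding the normalisation gives
\[
\alpha_{\tau+1} = \alpha_\tau - \eta_d\, v\cdot\nabla_{\mathrm{sph}}\mathcal L(w_\tau) - \eta_d\, v\cdot\nabla_{\mathrm{sph}}H_d(x_\tau,w_\tau) + r_\tau ,
\]
with $|r_\tau| \lesssim \eta_d^2\|g_\tau\|^2\big(|\alpha_\tau| + \eta_d|v\cdot g_\tau|\big)$. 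Using the Hermite expansions of $F_\sigma$ and $L_t$ (as in the heuristic before \cref{assumption:A}), the population drift is
\[
-v\cdot\nabla_{\mathrm{sph}}\mathcal L(w) = c^L_{k^*}c^F_{k^*-1}\,\alpha^{k^*-1} + O(\alpha^{k^*}),
\]
and hence $\big|v\cdot\nabla_{\mathrm{sph}}\mathcal L(w)\big|\le C|\alpha|^{k^*-1}$ on the sphere. Sub-Gaussianity of the data makes the Hermite series summable, which justifies this bound.

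\textbf{Step 2: summed bounds up to a stopping time.} Fix a small $\varepsilon_0>0$ and let $T$ be the first time $|\alpha_\tau|\ge\varepsilon_0$. Telescope up to $\tau\wedge T\le n$ and control each piece separately.

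\emph{Martingale term.} $M_\tau = \eta_d\sum_{s<\tau} v\cdot\nabla_{\mathrm{sph}}H_d(x_s,w_s)$ has $\mathbb E M_n^2 \le C_1\eta_d^2 n$ by \cref{assumption:B}. Doob's inequality then gives $\sup_{\tau\le n}|M_\tau| = O_P(\eta_d\sqrt n)$, which is $o(1)$ in both regimes:
\begin{itemize}
\item for $k^*\ge 3$, $\eta_d\sqrt n = O(d^{-1/2})$;
\item for $k^*\le 2$, $\eta_d\sqrt n = O(\sqrt{n}/d) = o(1)$, since $n=o(d\log^2 d)$ at most.
\end{itemize}
In the $k^*\ge3$ case the fluctuation is at most of the initial order $d^{-1/2}$.

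\emph{Curvature term.} \cref{assumption:B} gives $\mathbb E\|g\|^2\lesssim d$, so $\sum_\tau |r_\tau| \lesssim \eta_d^2 d\, n \sup|\alpha| + \text{lower order}$. The $(4+\varepsilon)$-moment bound controls the fluctuations of $\sum\eta_d^2\|g\|^2$ around its mean.
\begin{itemize}
\item For $k^*\ge3$, $\eta_d^2 dn = O(1)$. Moreover, the effect is contracting: normalisation only shrinks $|\alpha|$ up to the $\eta_d^2|v\cdot g|^2$ part, which is tiny.
\item For $k^*\le2$, $\eta_d^2 dn = O(n/d)$. I expect to need to treat this carefully: the normalisation factor multiplies $\alpha$ by $(1+\eta_d^2\|g\|^2)^{-1/2}\le1$, so it cannot create growth. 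I would therefore handle the normalisation multiplicatively rather than additively.
\end{itemize}

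\textbf{Step 3: the drift and a discrete Bihari--LaSalle comparison.} On $\{\tau<T\}$, after absorbing the martingale term, $a_\tau := \sup_{s\le\tau}|\alpha_s|$ satisfies
\[
a_\tau \le a_0 + o(1)\ \text{(or } O(d^{-1/2})\text{)} + C\eta_d\sum_{s<\tau} a_s^{k^*-1}.
\]
\begin{itemize}
\item \emph{$k^*=1$:} the drift contributes $C\eta_d n = O(n/d) = o(1)$.
\item \emph{$k^*=2$:} Gronwall gives $a_n \le (a_0+o(1))\,e^{C\eta_d n}$. This is the delicate case. With $\eta_d = O(1/d)$ and $n=o(d\log^2 d)$, the factor $e^{C\eta_d n}$ can be large. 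The fix is to use $a_0 = O(d^{-1/2})$ together with the sharper martingale bound $O(\eta_d\sqrt n)$, scaled relative to the growing envelope via the standard BGJ linearisation trick. Growth to order $1$ requires $\eta_d n\gtrsim \tfrac12\log d$, i.e.\ $n\gtrsim d\log d$. I expect the $\log^2$ in the threshold to leave exactly enough room, although matching the log factors is the main obstacle.
\item \emph{$k^*\ge3$:} comparison with the ODE $\dot a = C\eta_d a^{k^*-1}$ started at $a_0\sim d^{-1/2}$ shows blow-up only after time $\sim a_0^{2-k^*}/\eta_d = d^{(k^*-2)/2}/\eta_d$. Since $\eta_d = O(1/\sqrt{nd})$, this time is $\gtrsim d^{(k^*-1)/2}\sqrt n$, which exceeds $n$ exactly when $n\lesssim d^{k^*-1}$. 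Consequently $a_n = O(d^{-1/2})\to0$.
\end{itemize}

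\textbf{Step 4: conclusion.} Steps 2--3 show that $T>n$ with probability $1-o(1)$ for every $\varepsilon_0$, which gives convergence in probability. Since $|\alpha|\le1$, bounded convergence upgrades this to $L^p$ convergence for every $p\ge1$.
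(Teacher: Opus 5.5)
Your skeleton (reduce to $\alpha_\tau$, split into drift, martingale and normalisation error, then run a Gronwall/Bihari comparison for $a_\tau=\sup_{s\le\tau}|\alpha_s|$) is the Ben Arous--Gheissari--Jagannath argument. The paper imports that argument wholesale by citing their Theorem 1.4, so your route is the right one. The gap is in the thresholds you are aiming for.

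You read $n=o(\hat n(d,k^*))$ as $n=o(d\log^2 d)$ for $k^*=2$, and implicitly as $n=o(d^{k^*-1}\log^2 d)$ for $k^*\ge 3$. But the $\log^2 d$ in $\hat n$ comes only from the step-size window $\frac{1}{\hat n}\ll\eta_d\ll\frac{1}{\sqrt{\hat n d}}$ of the positive result. The BGJ negative theorem that the paper invokes is stated at the critical scales $n=o(d)$, $n=o(d\log d)$ and $n=o(d^{k^*-1})$.

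Under your reading the claim is false, and your own Step 3 shows why. For $k^*=2$ and $\eta_d=c/d$ with $c$ small, the linear drift beats the $O(\eta_d^2 d)=O(c^2/d)$ per-step loss from normalisation. So $|\alpha_\tau|$ grows like $d^{-1/2}e^{\kappa\tau/d}$ for some $\kappa>0$ and reaches order one at $\tau\asymp d\log d$. Hence for $d\log d\ll n\ll d\log^2 d$ SGD does weakly recover, and no matching of log factors or linearisation trick can rescue that case.

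The same happens for $k^*\ge 3$. With $\eta_d=c/\sqrt{nd}$ and $n=d^{k^*-1}\log d$, your blow-up time $d^{(k^*-2)/2}/\eta_d$ equals $n/(c\sqrt{\log d})<n$. Your comparison argument genuinely needs $n=o(d^{k^*-1})$, not ``$n\lesssim d^{k^*-1}$''.

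Once the hypothesis is fixed at the critical scale, the ``delicate'' case is easy. For $k^*=2$ you have $\eta_d n=o(\log d)$, so $e^{C\eta_d n}=d^{o(1)}$. Meanwhile $a_0+\sup_{\tau\le n}|M_\tau|=O_P(d^{-1/2}+\sqrt n/d)=O_P(\sqrt{\log d/d})$. Plain discrete Gronwall then gives $a_n=d^{-1/2+o(1)}\to 0$.

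The normalisation error also needs no special multiplicative treatment. Bounding $|\alpha_{\tau+1}|\le|\alpha_\tau-\eta_d v\cdot g_\tau|$ step by step would in fact destroy the martingale cancellation. Instead, note that the error is at most $\sum_{s<\tau}\eta_d^2\|g_s\|^2a_s$ plus a negligible $\eta_d^3\|g_s\|^2|v\cdot g_s|$ term, and put it inside the Gronwall/Bihari sum. There it only costs a factor $\exp\bigl(\sum_s\eta_d^2\|g_s\|^2\bigr)=\exp\bigl(O_P(\eta_d^2dn)\bigr)$, which is $d^{o(1)}$ for $k^*\le 2$ and $O_P(1)$ for $k^*\ge 3$. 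By contrast, treating it as an additive $O(\eta_d^2dn)\sup|\alpha|$ term, as in your Step 2, does not close for $k^*\ge 3$, because that term is of the same order as $\sup|\alpha|$ itself.
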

\begin{proof}
    \Cref{prop:single-index-positive,prop:single-index-negative} are essentially corollaries of theorems 1.3 and 1.4 in \cite{benarous2021online}. We explain the details in appendix \ref{app:computations_projected_SGD}.
\end{proof}

Together, \cref{prop:single-index-positive,prop:single-index-negative} show how the diffusion information exponent $k^\star$ governs the sample complexity of online SGD: the model can only recover the planted direction $v$ after it has seen a number of samples roughly on the order of $d^{k^*-1}$, and not before. For our setting given by \cref{eq:MCM} with $\beta_u=0, \beta_v>0$, we find $k^{*}=4$, meaning that recovery of the cumulant spike takes a number of samples larger than cubic in the dimension.

We note that in the setting of \cref{prop:single-index-positive} and \ref{prop:single-index-negative}, the lower bounds of \citet{szekely_learning_2024} for algorithmic detection of the cumulant apply, which suggest that the cumulant spike $v$ can be weakly recovered by a polynomial-time algorithm with $d^{k^*/2}$ samples, which is less than the sample complexity that we found for online SGD. In analogy to Gaussian single-index models, we expect that the smoothing techniques of \citet{Biroli_2020,damian2023smoothing} could reduce the sample complexity from  $n\gtrsim d^{k^*-1}$ down to $n\gtrsim d^{k^*/2}$, at the cost of fine-tuning the activation function of the denoiser to the relevant cumulant of the data distribution; see \citet{ricci2025feature} for an example of effect in the context of independent component analysis.

Even reducing the sample complexity to $d^{k^\star/2}$ would not fully account for the rapid separation from Gaussian clones observed in \cref{fig:loss-traj}. This gap likely reflects the highly idealized nature of our setting, where higher-order statistics are completely isolated; in the next section, we study how interactions between low and higher-order information qualitatively change this picture.

\subsection{Simplicity bias in denoising diffusion}%
\label{sec:simplicity_bias}

We now add back the covariance spike $\beta_u$ to our data model, which is carried by the covariance. This corresponds more closely to the setting found in real tasks, where both lower and higher order statistics are present in the data. The dynamics in this case are more complex, and the sample complexity required to learn the structure in the higher-order correlations depends on whether there are correlations among latent variables, as illustrated by the next proposition.

\begin{assumption}
\label{assumption:mixed cumulant}
The link function $\sigma$ is a thrice differentiable function, with bounded  first, second and third order derivatives. Hence $F(z)=\sigma''(z)-\sigma'(z)\sigma(z)-\sigma(z)-\sigma'(z)z$ belongs to the space of square integrable functions with respect to the density of $\mathcal N(0,\id)$ for all $t$ (being all sub-Gaussian distributions). Assume moreover that $\sigma$ is so that $F$ satisfies the following conditions
\begin{align}
   \label{eq:assump_F1} c_1^F&=\underset{z\sim \mathcal N(0,1)}\EE[F(z)h_1(z)]>0
    \\
     c_3^F&=\underset{z\sim \mathcal N(0,1)}\EE[F(z)h_3(z)]<0
     \label{eq:assump_F2}
\end{align}
 \end{assumption}
\begin{proposition} \label{prop:MCM}
    Under \cref{assumption:mixed cumulant}, consider pSGD \cref{eq:projected_SGD_def}  dynamics trained on $\mathscr L$ defined as \cref{eq:sample_wise_loss_def}, with data distributed as a \emph{mixed cumulant model} \cref{eq:MCM}. Then:
    \begin{enumerate}
        \item   with \emph{independent latent variables $\lambda^\mu$, $\nu^\mu$} and learning rate $\eta_d\to 0$ as $d\to \infty$, as long as $n=o_d\left(\min\left(\frac{d}{\eta^2_d},d^3\right)\right)$, we have that $
        \lim_{d\to \infty} \sup_{\tau\le n}|w_\tau\cdot v|=0 \ \text{in $L^p$ for every $p\ge 1$.}$
\item with a number of samples $n=\theta_d d$, with $\theta_d=\Omega (\log^2 d)$ and growing at most polynomially in $d$; step size $\eta_d$ chosen so that $
\frac{1}{\theta_d}\ll \eta_d \ll \frac{1}{\sqrt{\theta_d}}$
pSGD reaches \emph{weak recovery} in a time $\tau_u\le n$ i.e. there exists $\iota>0$ independent of $d$ such that for $\tau\ge \tau_u$, with high probability $w_\tau\cdot u\ge \iota$. Moreover, in the case of positive correlation of latent variables $\EE[\lambda^\mu \nu^\mu]>0,$ conditioning on having matching sign at initialization: $(v\cdot w_0)(u\cdot w_0)>0$, \emph{weak recovery is achieved also for the cumulant spike $v$ in a time $\tau_v\le n$.}
\end{enumerate}
\end{proposition}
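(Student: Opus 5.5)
We reduce the pSGD dynamics to a two-dimensional system for the overlaps $\alpha_\tau=w_\tau\cdot v$ and $\beta_\tau=w_\tau\cdot u$, following the methodology of \citet{bardone_sliding_2024}. We treat $u$ and $v$ as orthonormal (or nearly so), so that $w$ decomposes into the plane spanned by $u,v$ plus an orthogonal remainder that stays isotropic.

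\textbf{Step 1: population drift.} Take the inner product of the spherical gradient \cref{eq:sample_wise_loss_def} with $v$ and with $u$. Then use Stein's lemma (\cref{lemma:Stein}) and the Hermite expansion of the likelihood ratio $L_t$ of the MCM, which factorises along $u$, along $v$, and along a Gaussian orthogonal part. The drift of $\alpha$ then takes the form
\[
\sum_k c^F_{k-1}\,\EE\big[h_k\text{-projection of }L_t\big]\,(\cdots),
\]
and we compute its leading terms explicitly. The $u$ direction carries only a second-order Hermite coefficient, proportional to $e^{-2t}\beta_u$. The $v$ direction has vanishing second-order coefficient, because its variance is fixed to one, and a nonzero fourth-order coefficient from the Rademacher cumulant. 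With independent latents, all mixed coefficients involving both $u$ and $v$ vanish to leading order. This gives
\[
\dot\beta\approx c_1^F\,\kappa_u\,\beta(1-\beta^2),\qquad
\dot\alpha\approx c_3^F\,\kappa_v\,\alpha^3+(\text{terms }O(\alpha\beta^2)),
\]
where the coupling through the spherical projection $-ww^\top$ contributes $-\alpha\,(\text{drift}\cdot w)$. The sign conditions \cref{eq:assump_F1,eq:assump_F2}, combined with the signs of $\kappa_u>0$ and $\kappa_v<0$ (the negative excess kurtosis of the Rademacher latent), make the $u$-drift expansive and the $v$-drift expansive at order three.

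\textbf{Step 2: negative part.} For the first item we bound the martingale $M_\tau=\eta_d\sum_{s\le\tau} \nabla_{\mathrm{sph}}H_d\cdot v$ by Doob's inequality, using a variance bound analogous to \cref{assumption:B}; this gives $|M_\tau|\lesssim\eta_d\sqrt n=o(\sqrt{1/d}\cdot\sqrt d)$ relative to the initial scale, which is where $n=o(d/\eta_d^2)$ enters. For the drift we argue by comparison. As long as $|\alpha|\le d^{-1/2}\mathrm{polylog}$, the cubic term contributes at most $n\eta_d\alpha^3=o(d^{-1/2})$ when $n=o(d^3)$, up to the choice of $\eta$. The cross term $\alpha\beta^2$ is the dangerous one. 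Through the projection it actually enters as $-\alpha\cdot 2c_1^F\kappa_u\beta^2$, which is \emph{contractive} for $\alpha$ once $\beta$ grows. So $|\alpha|$ cannot be amplified, and a Gronwall/stopping-time argument shows $\sup_\tau|\alpha_\tau|\to0$ in probability. The $L^p$ convergence follows from $|\alpha|\le1$.

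\textbf{Step 3: positive part.} Since the $u$-direction has exponent $k=2$, we invoke \cref{prop:single-index-positive} in its $k^*=2$ form, or more precisely the proof from \citet{benarous2021online} applied to the $\beta$ coordinate with the $\alpha$-dependent perturbation controlled. This yields weak recovery of $u$ at $n=\theta_d d$ with $\theta_d=\Omega(\log^2 d)$ under the stated step sizes. For correlated latents with $\EE[\lambda\nu]>0$, an extra mixed Hermite coefficient appears, producing a term $\propto c_1^F\sqrt{\beta_u\beta_v}e^{-2t}\EE[\lambda\nu]\,\beta$ in $\dot\alpha$ (and symmetrically $\propto\alpha$ in $\dot\beta$). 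This is a linear drift for $\alpha$ driven by $\beta$. Once $\beta\ge\iota$, $\alpha$ receives an $O(1)$ push of sign $\mathrm{sgn}(\beta)$ per unit of effective time $\eta_d\tau$, so it reaches order one within $O(1/\eta_d)\ll n$ further steps. The matching-sign conditioning guarantees that this push reinforces rather than cancels the initial $\alpha$ during the phase when $\beta$ is still small.

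\textbf{Main obstacle.} The main obstacle is Step 2, specifically controlling the cross terms $O(\alpha\beta^2)$ and higher mixed Hermite terms uniformly while $\beta$ moves macroscopically. To handle it, we would first try to show that the total mixed contribution to $\dot\alpha$ factors as $\alpha\cdot g(\alpha,\beta)$ with $g\le C\alpha^2$. Then $|\alpha_\tau|\le|\alpha_0|\exp(C\eta_d\sum\alpha^2)+|M_\tau|$ closes with a bootstrap.
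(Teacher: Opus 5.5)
Your outline gets the mechanisms right, but it re-derives the result from scratch, whereas the paper's proof is a pure reduction. The paper never analyses the overlap dynamics itself. It notes that the spherical population gradient expands as in equation~\ref{eq:MCMpopuloss_expansion}, which is formally identical to eq.~(25) of \citet{bardone_sliding_2024} (a single neuron trained with correlation loss on the MCM) under the renaming $k c^\sigma_k\leftrightarrow c^F_{k-1}$. In other words, $F_\sigma$ plays the role of the activation's derivative, so \cref{assumption:mixed cumulant} is exactly their Assumption~1. The paper then checks the noise bounds of \cref{assumption:B} via sub-Gaussianity and applies their Propositions~3 and~4 verbatim.

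Your Step~1 recomputes that same expansion without exploiting the match. Steps~2--3 then re-sketch what those propositions prove:
\begin{itemize}
\item Doob control of the martingale;
\item a time-scale comparison for the cubic drift;
\item projection-induced contraction of $\alpha$ once $u$ is learned;
\item the linear coupling through $c^L_{1,1}=e^{-2t}\sqrt{\beta_u\beta_v}\,\mathbb E[\lambda\nu]$ (for $u\perp v$).
\end{itemize}
The reduction buys brevity and inherits all the delicate bookkeeping. Your route makes the mechanisms explicit, but two points need repair before it closes.

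\textbf{The Gaussian direction has more than a second-order coefficient.} With $s=e^{-2t}\beta_u$, one has $\mathbb E[h_{2k}(x\cdot u)]=(2k-1)!!\,s^k\neq 0$ for every $k$. Write $D_u(\beta)$ for the $u$-component of the unprojected drift. With the normalisation of \cref{def:herm exp}, $\beta D_u(\beta)=c^F_1 s\beta^2+\tfrac12 c^F_3 s^2\beta^4+\dots$. Since $c^F_3<0$ and nothing else is assumed on $F$, this can vanish at some macroscopic $\beta^\ast<1$. So $-\alpha\,\beta D_u$ is not contractive for all $\beta$, and $g\le C\alpha^2$ is not a global identity. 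You need an extra step: $\beta$'s own drift is $\approx D_u(\beta)(1-\beta^2)$, so up to noise fluctuations (which you must control) $\beta$ does not cross the first zero of $D_u$, and $\beta D_u\ge -o(1)$ along the trajectory. The same fact explains why only weak recovery of $u$ at some $\iota<\beta^\ast$ is claimed. It is also why \cref{prop:single-index-positive}, which needs \cref{assumption:A}, cannot be quoted as is in Step~3.

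\textbf{Your scalings fit the hypotheses only with effective step $\eta_d/d$.} This is the normalization of \citet{benarous2021online} and \citet{bardone_sliding_2024}, and item~2's window $1/\theta_d\ll\eta_d\ll1/\sqrt{\theta_d}$ is written in it. Taken literally, $n=o(d/\eta_d^2)$ gives $\eta_d\sqrt n=o(\sqrt d)$, not $o(1)$. With step $\eta_d/d$ the computations go as follows:
\begin{itemize}
\item The martingale is $\frac{\eta_d}{d}\sqrt n=o(d^{-1/2})$.
\item While $|\alpha|\le Kd^{-1/2}$, the cubic drift accumulates at most $K^3 n\eta_d d^{-5/2}$. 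This is $o(d^{-1/2})$ because $n\eta_d=o(d^2)$ follows from $n=o(\min(d/\eta_d^2,d^3))$ in both regimes $\eta_d\gtrless 1/d$. That is how the two constraints combine, rather than ``up to the choice of $\eta$''.
\item A $\mathrm{polylog}(d)\,d^{-1/2}$ envelope for $\alpha$ is too loose for this count; a fixed large $K$ suffices for the bootstrap.
\item The final linear push in the correlated case takes $O(d/\eta_d)$ steps. This is $\ll\theta_d d$ precisely because $\eta_d\gg1/\theta_d$, which is where that lower bound enters.
\end{itemize}
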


The first part of \cref{prop:MCM} is a negative result: the cumulant spike cannot be recovered at linear sample complexity. The second part of this statement is instead a positive result: pSGD weakly recovers the covariance spike $u$, and hence learns about the pair-wise statistics, in quasi-linear sample complexity. For the cumulant spike $v$ instead, we find that if the latent variables $\lambda^\mu$ and $\nu^\mu$ are uncorrelated, pSGD will need at least $d^{3}$ samples to weakly recover $v$ (as for single spike models, see \cref{prop:single-index-positive}), and hence learn about higher-order correlations. This clear separation of timescales for the recovery of $u$ and $v$ rigorously establishes the distributional simplicity bias: the model learns pair-wise statistics (long) before higher-order correlations. However, if latent variables have a positive correlation, pSGD will recover the spike $v$ with $\Theta \left(d\text{polylog}(d)\right)$ samples. A similar speed-up due to correlated latent variables was found by \citet{bardone_sliding_2024} for supervised classification. We provide the proof  in \cref{app:MCM}.  We show an example of a neural network trained on the mixed cumulant model in \cref{fig:loss-traj} c).  This model exhibits the same sequential learning property as conventionally trained diffusion models on image data, see  \cref{fig:loss-traj} a), b).
 Note that our experiments use Adam on richer architectures while our theory analyses online SGD on a single-neuron denoiser. The qualitative agreement supports the robustness of the simplicity bias, but an exact quantitative match of timescales is not expected.

\subsection{The importance of the spherical constraint for SGD}%
\label{sec:importance_spherical_constraint}

In order to consider a setting slightly closer to practice, we now remove the spherical constraint on SGD. The optimization algorithm then becomes simply:
\begin{equation} \label{eq:SGD_def}
\begin{cases}w_0\sim \text{Unif}(\mathbb S^{d-1}) \\
w_{\tau+1} =w_{\tau}-\eta_d \nabla\mathscr L(w_\tau,x_\tau) & \tau>1
\end{cases}
\end{equation}
Surprisingly, this simple change can let performance of the denoiser greatly deteriorate, as we now discuss.

We can see this loss of performance most clearly in the single spike case with $\beta_u=0$. Expanding the gradient of the population loss, we notice that the removal of the spherical constraint leads to appearance of an additional, radial term. Recalling the notation for the overlap with the hidden spiked direction $\alpha_\tau=v\cdot w_\tau$, we have:
 \begin{align}
\label{eq:popu_loss_vanillaSGD_expansions}
      -\nabla\mathcal L_t(w_\tau) &= -\EE[\nabla \mathscr L(w_\tau,x_\tau)]\\ & = \underbrace{\left(\frac{c_{k^*}^Lc_{k^*-1}^{\tilde F}}{(k^*-1)!}\alpha_\tau^{k^*-1}+O(\alpha_\tau^{k^*})\right)v}_{\text{signal term}}+\\& \quad +\underbrace{\left(c_1^{\tilde F}+c_0^{G_\sigma}+O(\alpha_\tau)\right)w_\tau}_{\text{additional radial term}},
\end{align}
where $(c_i^{\tilde F})_{i\in \mathbb N}$ and $(c_j^{G})_{j\in \mathbb N}$ are the Hermite coefficients of the functions:
\begin{equation}
\label{eq:Gdef}
\begin{aligned}
 \tilde{F}_\sigma(x_w, ||w||)&=\sigma''(x_w)||w||^2-\sigma'(x_w)\sigma(x_w)||w||^2 \\
 &\qquad-\sigma(x_w)-  \sigma'(x_w)x_w,\\
 G_\sigma(x_w)&=2\sigma'(x_w)-\sigma^2(x_w).
\end{aligned}
\end{equation}

Due to the fact that we are not considering $w$ to have fixed norm, the coefficients  $(c_i^{\tilde F})_{i\in \mathbb N}$ and $(c_j^{G})_{j\in \mathbb N}$ depend on $||w||$ and are defined by the formula (see also \cref{app:SGD_without} for more details on the expansion):
 \begin{align} c_k^{\tilde F}(||w||)&:= \underset{x\sim \mathcal N(0,\id)}{\mathbb E}\left[\partial^k\tilde F_\sigma(w\cdot x)\right]\\
      c_k^{G}(||w||)&=\underset{x\sim \mathcal N(0,\id)}{\mathbb E}\left[\partial^kG_\sigma(w\cdot x)\right]
      \end{align}
We can immediately see that the additional radial term strongly impacts the
dynamics; in particular, in case $\sigma$ is an odd function (which implies
 $c^{\tilde F}_0=0$) or $c^L_1=0$, it is the leading contribution in \cref{eq:popu_loss_vanillaSGD_expansions}.

\begin{figure}[t!]
    \centering
    \includegraphics[width=.95\linewidth]{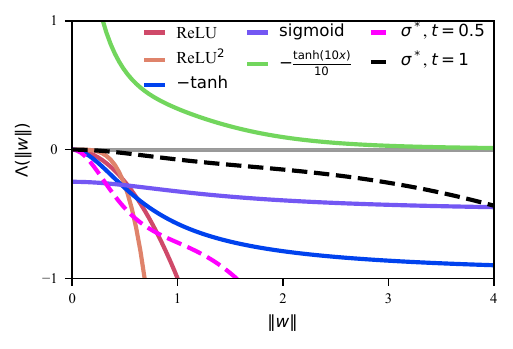}
    \caption{Examples of the contraction term $\Lambda$ for different choices of activation $
\sigma$.  $\sigma^*$ denotes the matched functional form of the score \cref{eq:score_spiked} for different values of the diffusion time $t$.}
    \label{fig:Lambda}
\end{figure}
 
Zooming in on this case, many choices of $\sigma$ (especially if we pick $\sigma$ to be able to perfectly match the score, see \cref{fig:Lambda} and \cref{sec:matched}) imply that  $c_1^{\tilde F}+c_0^G<0$, so it acts as a weight decay term that pushes $w_{\tau+1}$ towards zero. In case $c^L_2$ not large enough, the weight decay term overwhelms the signal term, leading to contraction dynamics that converge to zero. In the next two paragraphs we will analyse the rigorous statement and interpret its implication.

\begin{proposition}
\label{prop:single-spike-nonspherical}
In the setting described in \cref{sec:importance_spherical_constraint} with $c_1^Lc_0^{\tilde F}=0$ (for instance, this is verified when $\sigma$ is an odd function), with a population loss that can be expanded as \cref{eq:popu_loss_vanillaSGD_expansions}, with a $\mathcal C^\infty$ link function  $\sigma$ such that $\Lambda(||w||):= (1+c_2^L)c_1^{\tilde F}(||w||)+c_0^G$ and $\Lambda(||w||)\le -k_0<0$ for all $||w||\le \Gamma$. Suppose we initialize such that $||w_0||<\Gamma$ and $\alpha_0\to0$ as $d\to \infty$. We perform SGD  as in \cref{eq:SGD_def}  with step size $\eta$, such that $\lim_{d\to \infty}\eta=0$ . Then there exists $\bar d$ such that for $d>\bar d$ the sequences $\alpha_\tau:=w_\tau\cdot v$ and $||w_\tau||$ satisfy the following upper bound for all $\tau$ with probability that goes to 1 as $d\to \infty$:
\begin{align}
     \label{eq:alpha_inductive}
    \alpha_{\tau}&\le \bar \gamma^\tau\alpha_0 + r_d\\
   \label{eq:wnorm_inductive} \norm{w_\tau}&\le \bar \delta^\tau ||w_0|| + s_d
\end{align}
where $0<\bar \gamma,\bar \delta<1$ and $r_d,s_d \to 0$ as $d\to \infty$. In particular $\bar d$ is such that $\alpha_\tau\le \alpha_0$ and $||w_\tau||\le ||w_0||$ for all $\tau\ge0$.
\end{proposition}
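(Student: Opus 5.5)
We will prove the two bounds \cref{eq:alpha_inductive} and \cref{eq:wnorm_inductive} jointly by induction on $\tau$. The driving mechanism is the radial term in \cref{eq:popu_loss_vanillaSGD_expansions}, which acts as a weight decay.

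\textbf{Decomposition of one step.} We first write one SGD step as
\[
w_{\tau+1}=w_\tau-\eta\nabla\mathcal L(w_\tau)-\eta\nabla H_d(x_\tau,w_\tau),
\]
with $H_d$ as in \cref{assumption:B}. Projecting on $v$, and using $c_1^Lc_0^{\tilde F}=0$ together with the expansion \cref{eq:popu_loss_vanillaSGD_expansions}, the linear-in-$\alpha$ part of the drift in direction $v$ is $\alpha_\tau$ times a coefficient built from the radial term and the $c_2^L$ contribution. Up to the bookkeeping of how $c_2^L$ enters, this coefficient is exactly $\Lambda(\|w_\tau\|)$, and the higher Hermite terms give $O(\alpha_\tau^2)$. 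This yields
\[
\alpha_{\tau+1}=\bigl(1+\eta\Lambda(\|w_\tau\|)+O(\eta\alpha_\tau)\bigr)\alpha_\tau+\eta M^v_{\tau}.
\]
For the norm we expand
\[
\|w_{\tau+1}\|^2=\|w_\tau\|^2-2\eta\, w_\tau\cdot\nabla\mathcal L+\eta^2\|\nabla\mathscr L\|^2-2\eta\, w_\tau\cdot\nabla H_d .
\]
Here $w_\tau\cdot(-\nabla\mathcal L)$ is dominated by $\bigl(c_1^{\tilde F}+c_0^G\bigr)\|w_\tau\|^2$. Since $\Lambda\le -k_0$ on the ball $\|w\|\le\Gamma$, this term (possibly after absorbing the $c_2^L$ correction and the $O(\alpha)$ terms) is at most $-k_0'\|w_\tau\|^2$ for some $k_0'>0$. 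Hence
\[
\|w_{\tau+1}\|\le(1-\eta k_0')\|w_\tau\|+\eta\,\text{(martingale)}+O(\eta^2 d).
\]

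\textbf{Induction.} The inductive hypothesis is $\|w_\tau\|\le\|w_0\|<\Gamma$ and $\alpha_\tau\le\alpha_0\to0$. Under it, $\Lambda(\|w_\tau\|)\le-k_0$ and the $O(\eta\alpha_\tau)$ correction is at most $\eta k_0/2$ once $d>\bar d$. We therefore get contraction factors $\bar\gamma=1-\eta k_0/2$ and $\bar\delta=1-\eta k_0'/2$, and unrolling the recursions gives
\[
\alpha_\tau\le\bar\gamma^\tau\alpha_0+\eta\sum_{s<\tau}\bar\gamma^{\tau-1-s}M^v_s .
\]
It remains to show that the weighted martingale sums are uniformly small. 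For $\alpha$, \cref{assumption:B} gives second moments of $M^v_s$ bounded by $C_1$, so the weighted sum has variance at most $\eta^2C_1/(1-\bar\gamma^2)=O(\eta)$. A Doob maximal inequality applied to the discounted martingale then controls the supremum over $\tau$. Because the sum is exponentially discounted, we can work on windows of length $O(1/\eta)$ with a union bound, which yields $r_d\to0$ as $\eta\to0$.

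\textbf{Main obstacle.} We expect the hardest step to be the norm recursion, where noise of size $\|\nabla H_d\|\sim\sqrt d$ enters. The second-order term $\eta^2\|\nabla\mathscr L\|^2\sim\eta^2 d$ must be dominated by the contraction $\eta k_0'\|w\|^2$, which forces a regime like $\eta d\to0$, or at least a balance producing a stationary level $s_d$. We would first use the $4+\varepsilon$ moment bound of \cref{assumption:B} to control $\eta^2\|\nabla\mathscr L\|^2$ uniformly with high probability. If $\eta d$ does not vanish, we would absorb this term into $s_d$, interpreting $s_d$ as the equilibrium radius $\sim\eta d/k_0'$; this requires $\eta=o(1/d)$, which we would impose or extract from the growth of $\bar d$. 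Finally, the step "$\alpha_\tau\le\alpha_0$ for all $\tau$" needs $r_d\le(1-\bar\gamma)\alpha_0$ eventually. Since $\alpha_0\sim d^{-1/2}$, this is delicate, and we would instead interpret it as holding up to the vanishing error terms, choosing $\bar d$ accordingly.
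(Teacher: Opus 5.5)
Your treatment of $\alpha_\tau$ follows the paper: the same linearised recursion with coefficient $\Lambda(\|w_\tau\|)$, the same induction on $\alpha_\tau\le\alpha_0$ and $\|w_\tau\|\le\|w_0\|$, and the same geometrically discounted martingale controlled by the directional bound in \cref{assumption:B}. For the norm you diverge, and the obstruction you isolate is genuine. It is exactly where the paper's proof fails.

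The paper never expands $\|w\|^2$. It unrolls $w_{\tau+1}=(1+\eta\Lambda)w_\tau+\eta\vec R_\tau+\eta\nabla H$ and bounds the vector noise using ``$\mathrm{Var}(\nabla H(x,w))\le C\|w\|^2$''. It also uses a Doob bound with $\sup_{u\in\mathbb S^{d-1}}$ inside the probability but only a single-direction variance on the right. For the full vector this is false: $\nabla H$ contains $x\,\tilde F_\sigma(w\cdot x)$ with $\|x\|\approx\sqrt d$, which is why the second estimate of \cref{assumption:B} carries $d^{(4+\varepsilon)/2}$. Restoring the factor $d$ turns the paper's Chebyshev variance $C\|w_0\|^2\eta/k_0$ into $Cd\eta/k_0$.

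So the proposition with only $\eta\to0$ is false, and your proof cannot close without adding $\eta d\to0$ as a hypothesis. It cannot be ``extracted from the growth of $\bar d$'', because $\eta=\eta_d$ is prescribed as a function of $d$. Concretely, $\sigma\equiv s\ne0$ satisfies every hypothesis: $\tilde F_\sigma=-s$, $G_\sigma=-s^2$, $\Lambda=-s^2$, $c_1^L=0$. SGD then reads $w_{\tau+1}=(1-\eta s^2)w_\tau-\eta s\,x_\tau$, so $\|w_\tau\|^2$ equilibrates near $\eta d/2$, which diverges for, e.g., $\eta=d^{-1/2}$. This is generic, not an artefact of the example. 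With $c_2^L=\mathbb E[(v\cdot x)^2]-1=0$ one finds $\Lambda(0)=-\sigma(0)^2$, so the hypothesis forces $\sigma(0)\neq0$. Then $\nabla\mathscr L(0,x)=\sigma(0)x$ has norm $\approx|\sigma(0)|\sqrt d$.

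The same example confirms your other doubts and adds one.
First, $\alpha_1-\alpha_0=-\eta s\,(s\alpha_0+v\cdot x_0)$ is positive with probability tending to $1/2$. So the clause $\alpha_\tau\le\alpha_0$ (and likewise $\|w_\tau\|\le\|w_0\|$) is false as stated. Your reading ``up to vanishing errors'' is the right one, and it is all the induction actually uses, namely $L\alpha_\tau\le k_0/2$ and $\|w_\tau\|<\Gamma$.
Second, $\alpha_\tau$ is then an ergodic AR$(1)$ chain whose innovations have a Gaussian component, so it almost surely exceeds any fixed $r_d$ eventually. Your union bound over windows of length $O(1/\eta)$, like the paper's Doob bound (which only covers $\tau\le n$), therefore yields the result only on a finite horizon, e.g.\ $n=o(r_d^2/\eta^2)$ with a second-moment bound per window, not for all $\tau$.

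Two smaller points on your norm step:
\begin{itemize}
\item Keep the recursion on $\|w_\tau\|^2$. There, $\eta^2\|\nabla\mathscr L\|^2$ is a nonnegative drift of mean $O(\eta^2 d)$ whose discounted sum is $O(\eta d/k_0)$, giving $s_d$ of order $\sqrt{\eta d/k_0}$. The inequality you wrote for $\|w_{\tau+1}\|$ with an additive $O(\eta^2 d)$ does not follow without dividing by $\|w_\tau\|$. The crude $\sqrt{a^2+b}\le a+\sqrt b$ costs $\eta\sqrt d$ per step, which is useless.
\item The $c_2^L$ term cannot be ``absorbed''. It acts on $\|w\|^2$ only through the $v$-component, of size $\alpha^2\ll\|w\|^2$, so the radial rate is $c_1^{\tilde F}+c_0^G=\Lambda-c_2^Lc_1^{\tilde F}$. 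This equals $\Lambda$ here only because $c_2^L=0$ for the whitened single spike. The paper's $w$-recursion with factor $1+\eta\Lambda$ silently relies on the same fact.
\end{itemize}

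With these changes ($\eta d\to0$, a finite horizon, and the inequalities read up to $o(1)$), your squared-norm route goes through. For the cross term $-2\eta\,w_\tau\cdot\nabla H$ you also need the second-moment bound of \cref{assumption:B} in the direction $w_\tau/\|w_\tau\|$, and on the ball $\|w\|\le\Gamma$ rather than only on the sphere.
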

The proof is presented in \cref{app:SGD_without} in the appendix.

Proposition \ref{prop:single-spike-nonspherical} shows that for many choices of  nonlinearity $\sigma$ SGD dynamics initialized isotropically with $w_0\sim \text{Unif}(\mathbb S^{d-1})$ cannot escape an attracting minimum at $w=0$: both the overlap   $\alpha_\tau$ and the norm $||w_\tau||$ shrink to 0 without ever surpassing their value at initialization.

It may seem counterintuitive to see a phenomenology that is so different from the case of projected SGD discussed in \cref{prop:single-index-negative,prop:single-index-positive,prop:MCM}. It turns out that this effect is due to precise properties of the data structure and the architecture that we are considering. First, we need to highlight the assumptions: $c_1^Lc_0^{\tilde F}=0$ means that data structures with \emph{low information exponent}, $c_1^L\ne 0$ are less likely to show this behavior: in these cases the strong linear correlations in the data strongly push the gradient of the loss towards non-trivial solutions, counterbalancing the attraction that the \emph{additional radial term} in \cref{eq:popu_loss_vanillaSGD_expansions} may present.
\begin{figure*}[t!]
    \includegraphics[width=\linewidth]{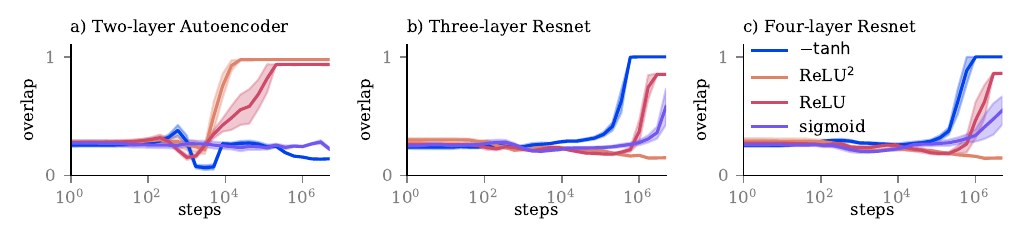}
    \caption{\label{fig:fig3} Normalized overlap of first-layer weights of neural networks of varying depth trained with Adam on inputs drawn from the mixed cumulant model, \cref{eq:MCM}, at $d=100$. All curves are averages over $5$ random initializations of the neural networks.}
\end{figure*}
In case of information exponent larger than one ($c_1^L= 0$ ), it is the assumption 
\begin{equation}  
\label{eq:Lambda}
\Lambda(||w||) = (1+c_2^L)c_1^{\tilde F}(||w||)+c_0^G<0
\end{equation}
that regulates the attracting behavior of the fixed point $w=0$. We can see in \cref{fig:Lambda} that it is possible to classify common activation functions based on this property (note that non-smooth functions such as ReLU or ReLU$^2$ violate the regularity assumptions of \cref{prop:single-spike-nonspherical}, but it is still possible to work with smoothed versions of these nonlinearities). It is worth pointing out that  in practice, it is possible to manually tweak $\sigma$ via small rescaling so that the sign of $\Lambda$ can change. An example is the case of $\sigma=-\tanh$: it is possible to consider a scaled version  $\xi\to -\frac{\tanh(10\xi)}{10}$ (\textcolor{green}{green} line in \cref{fig:Lambda}) for which $\Lambda$ is always positive. Note that the choice of the sign for $\sigma$ is driven by the necessity of requiring $C^{\tilde  F_\sigma}_3 C_4^L\ge 0$, otherwise the signal term in \eqref{eq:popu_loss_vanillaSGD_expansions} would push in the wrong direction.

\subsection{The case of the optimal denoiser}%
\label{sec:matched}

A particular case that it is worth discussing in detail is the choice of $\sigma$ that could optimally represent the score of the data distribution  (see the derivation of \cref{eq:score_spiked} in the appendix). Recalling again the single spike setting with $\beta_u=0$, for a fixed diffusion time~$t$, we define the matched denoiser as
\begin{equation}\label{eq:siggma*}
   \sigma^*_t (\xi) =-\frac {e^{-t}}{\Delta_t} \left(e^{-t}\xi-\tanh\left(\frac{\xi e^{-t}}{\Delta_t}\right)\right).
\end{equation}
The denoiser $\sigma^*_t$ has the property that it is the function that can reach the smallest population loss value, attained for $w=v$, and its functional form can be derived by simply going through the steps in the derivation of \cref{eq:formula_loss_diff_useful} backwards, see \cref{app:model}. Studying the optimization problem with $\sigma^*_t$ is the analog of studying denoising diffusion in the \emph{teacher-student scenario}: a setting in which the architecture that needs to perform inference has the same structure of the data generating process, and only needs to infer the weights (see for instance \citet{Engel2001StatisticalMO} for a review).

In our setting, for all times $t\in (0,1)$, $\sigma^*$ satisfies the assumptions of \cref{prop:single-spike-nonspherical} and the sub-optimal solution $w=0$ is the effective attractor of the dynamics. This is the first example known to the authors of a model in which the teacher-student scenario is suboptimal with respect to the choice of the non-linearity. We can single out two factors that explain why the solution $w=0$ is more stable than in other settings.
\begin{enumerate}
\item The most trivial contribution is given by the employment of MSE loss which provides the contraction term $\sigma ^2(x_w)$ inside $G_\sigma$. Even though this is an obvious contribution, it is worth pointing it out since many other theoretical works on single/multi index Gaussian models in supervised learning consider different loss functions, such as \emph{correlation loss}  studied in \cite{damian2023smoothing,bardone_sliding_2024}, that do not have this property. 
\item The nature of the score denoising tasks. Indeed, part of the push towards 0 in the case of $\sigma^*$ is due to the term $2(\sigma^*)'(x_w)$ in $G_\sigma$. This term comes from integrating out the additional Gaussian noise present in \eqref{eq:formula_loss_diff_useful} via Stein's lemma (\cref{lemma:Stein}), a specific contribution that is not present in usual supervised setting.
\end{enumerate}

\subsection{The role of overparametrization and depth}

We investigated how more flexible architectures can learn higher-order statistics without task-specific tuning of the nonlinearity by training multi-layer autoencoders (matched to \cref{eq:def_denoiser}) and feed-forward residual networks on the MCM model (\cref{fig:fig3}, see \cref{app:Details_MCM} for experimental details). 
 We measure learning by the maximal overlap $w^\top v/\|w\|$ between the first-layer weights and the spiked direction.
We observe that over-parameterization can mitigate the curse predicted by \cref{prop:single-spike-nonspherical}. In \cref{fig:fig3}, a wide two-layer autoencoder trained with Adam is able to learn the hidden direction $v$ with the activations ReLU and ReLU$^2$, even though $\Lambda<0$ for both these nonlinearities, and we recall that $\Lambda < 0$ implies that training a single-neuron autoencoder with unprojected GD would get stuck near the trivial solution $w=0$. Indeed, we show in \cref{fig:fig_adam_vs_overparam} that Adam applied to a narrow architecture remains stuck at zero. The over-parametrisation thus helps the network to escape the trivial solution. Over-parametrisation is well known to speed up learning in different settings, like learning two-layer ReLU networks~\citep{safran2021effects} or phase retrieval~\citep{sarao2020optimization}. For the two-layer networks in \cref{fig:fig_adam_vs_overparam}a), it could be that the repulsive interactions between neurons in wide networks~\citep{Mei-Montanari2018} spread neurons apart and counteract the contraction toward $w=0$ predicted by the single-neuron dynamics. Understanding this mechanism more precisely is an interesting direction for future work.

Width and depth, however, affect learnability in different ways. As depth increases, ReLU$^2$ networks lose the ability to recover the spike, likely due to exploding activations induced by repeated squaring. 
Strikingly, deeper networks with $\tanh$ or sigmoid activations do recover the spike. 
As discussed in \cref{sec:importance_spherical_constraint}, simple reparametrizations can flip the sign of $\Lambda$, so that an activation violating the condition of \cref{prop:single-spike-nonspherical} can be transformed into an equivalent learnable representation. 
This suggests a possible role of depth: early layers can implement data-dependent transformations that effectively reparameterize the input seen by later layers, improving the conditioning of the nonlinearity and thereby restoring learnability without explicit tuning of $\sigma$.

\section{Concluding perspectives}
We demonstrated experimentally that diffusion models learn statistics of increasing complexity over training, where complexity is defined via the order of the cumulants exploited by the model.
In the mixed cumulant model, we rigorously analyze this phenomenon and show that learning is governed by the diffusion information exponent $k^*$, which determines the sample complexity required to recover a given statistic and depends on both the architecture and the data distribution; correlated latent variables facilitate learning.
While our theoretical analysis focuses on a simple denoiser with a single hidden unit and a simplified data distribution with one or two non-Gaussian directions, the diffusion information exponent can in principle be computed for richer architectures and more structured data models.

We further compare two optimization protocols and show that SGD projected on the sphere outperforms its unconstrained counterpart, which can be trapped by a suboptimal attracting solution at $\|w\|=0$ for several common nonlinearities.
Numerical experiments indicate that increasing network width and depth restores the ability to learn higher-order statistics, suggesting that both overparametrization and depth can mitigate unfavorable nonlinearities without explicit fine-tuning.
Overall, our results show that diffusion models exhibit a distributional simplicity bias beyond supervised learning: when data or model expressivity is limited, they reliably learn lower-order statistics before higher-order ones.

\section*{Code availability}

The code to reproduce our experiments is available at \href{https://github.com/ClaudiaMer/DiffusionEasy2Hard}{https://github.com/ClaudiaMer/DiffusionEasy2Hard}.

\section*{Acknowledgements}

SG and CM gratefully acknowledge funding from the European Research Council
(ERC) for the project ``beyond2'', ID 101166056. SG also acknowledges funding from the European
Union--NextGenerationEU, in the framework of the PRIN Project SELF-MADE (code
2022E3WYTY – CUP G53D23000780001), and from Next Generation EU, in the context
of the National Recovery and Resilience Plan, Investment PE1 -- Project FAIR
``Future Artificial Intelligence Research'' (CUP G53C22000440006).

\section*{Impact statement}

This paper presents work whose goal is to advance the field of Machine Learning. There are many potential societal consequences of our work, none which we feel must be specifically highlighted here.

\bibliographystyle{icml2026}
\bibliography{biblio}
\appendix
\clearpage
\onecolumn

\section{Details on experiments}

\subsection{The clones \label{app:clones}}
To generate the clone datasets, we first determine the mean $\mu$ and the covariance $\Sigma$ of the test datasets for both CelebA and CIFAR-10. We then sample the "mean" clone from a Gaussian distribution with mean $\mu$ and identity covariance. We then sample the "mean + cov." clone from a Gaussian distribution with matching mean and covariance. We show examples of these datasets in \cref{fig:clones_CIFAR10} and \cref{fig:clones_CelebA}.
\begin{figure}
    \centering
    \includegraphics[width=\linewidth]{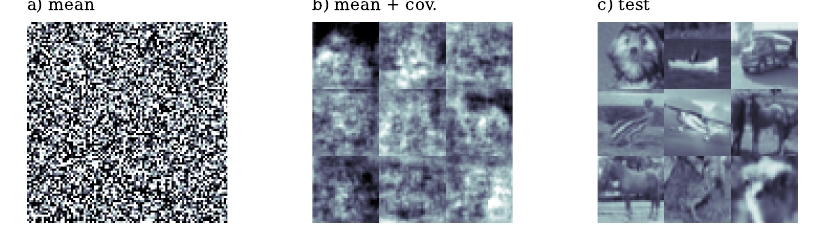}
    \caption{\label{fig:clones_CIFAR10} Samples from the different "clones" as well as the test data set. a) shows images drawn from the mean clone which follows a Gaussian distribution with matching mean to the CIFAR-10 dataset and identity covariance. In b), we additionally match the covariance matrix of the Gaussian distribution to the CIFAR-10 dataset. c) shows 9 images from the CIFAR-10 dataset.}
\end{figure}
\begin{figure}
    \centering
    \includegraphics[width=\linewidth]{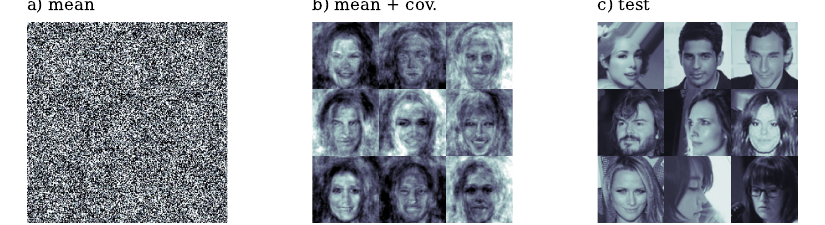}
    \caption{\label{fig:clones_CelebA} Samples from the different "clones" as well as the test data set. a) shows images drawn from the mean clone which follows a Gaussian distribution with matching mean to the CelebA dataset and identity covariance. In b), we additionally match the covariance matrix of the Gaussian distribution to the CelebA dataset. c) shows 9 images from the CelebA dataset. }
\end{figure}

\subsection{Sequential learning in CelebA data \label{app:CelebA_experiment}}
In \cref{fig:loss-traj-CelebA} we report the outcome of the sequential learning experiment on $ 10^5$ CelebA data \citep{liu_deep_2015}, which we downscale to $80\times80$ greyscale pixels. We observe the same sequential learning behavior as for the CIFAR-10 data. We observe that the loss curves are more noisy than the ones for CIFAR-10, we expect these effects to be due to the finite learning rate used during training.

\subsubsection{Training hyperparameters \label{app:train_hyper}}

We use diffusion models with a Unet architecture \citep{ronneberger_u-net_2015}, $T=10^3$ levels of noise and sinusoidal embedding for $t$. For the CIFAR-10 data reported in \cref{fig:loss-traj}, we use the Adam optimizer with learning rate $10^-3$. For the CelebA data reported in \cref{fig:loss-traj-CelebA}, we reduce the learning rate to $10^{-4}$. For both data sets we use a batchsize of $10^2$ samples.

\subsection{Learning the MCM model \label{app:Details_MCM}}

In this section, we describe the different architectures trained on the MCM model. We sort these descriptions by the figures they correspond to.

\textbf{\cref{fig:loss-traj}c): $n$-layer feedforward network with linear skip connection.} Let $x \in \mathbb{R}^d$ be the input drawn according to \cref{eq:MCM} with $\beta_u=10^2,\beta_v=1$. The network defines a sequence of hidden representations
$\{h^{(k)}\}_{k=1}^{d-1} \subset \mathbb{R}^m$ as follows:
\begin{align}
h^{(1)} &= g\!\left(W_1 x\right), \\
h^{(k)} &= g\!\left(W_k h^{(k-1)}\right),
\qquad k = 2, \dots, n-1 ,
\end{align}
where $W_1 \in \mathbb{R}^{m \times d}$, $W_k \in \mathbb{R}^{m \times m}$ for $k \ge 2$ are learnable weight matrices, $g(\cdot)$ is an element-wise nonlinearity, and $\lambda \in \mathbb{R}$ is a fixed residual scaling parameter, which we set to one.
The output of the network is given by
\begin{equation}
f(x) = \alpha x + W_{\mathrm{out}}\, h^{(n-1)}+Ax+b,
\end{equation}
where $W_{\mathrm{out}} \in \mathbb{R}^{d \times m}$ is a learnable output weight matrix and $\alpha \in \mathbb{R}$ is a fixed skip-connection coefficient. We have also added a general linear mapping $Ax+b$ to the output to allow the network to learn the Gaussian part of the score explicitly; $A\in \mathbb{R}^{d\times d}, b \in \mathbb{R}^d$ is a learnable matrix and bias vector. This architecture was used to produce panel c) \cref{fig:loss-traj}, choosing $\sigma=$ReLu and $m=10, n=3$. To train it, we used objective \cref{eq:formula_loss_diff_useful}, the Adam optimizer with learning rate $\eta=10^{-3}$ and a batch-size of $10^2$ data samples per step.

\begin{figure}
    \centering
    \includegraphics[width=\linewidth]{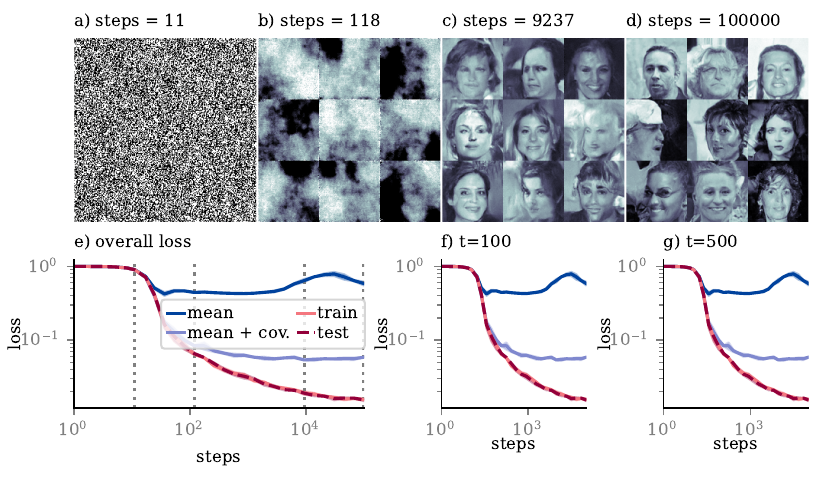}
     \caption{Training Unets on CelebA data. a)-d) samples generated from the model at various training stages. e) Test loss of the model on clones of the dataset during training. Vertical dotted lines mark training stages of images generated from the model shown in panels a)-d). f)-g) same as e), but for fixed level of noise. All curves are averaged over 2 initializations of the network models and $5\cdot10^3$ test data. Shaded areas report standard deviation over random initialization.}
    \label{fig:loss-traj-CelebA}
\end{figure}

\textbf{\cref{fig:fig3} a): Twolayer Autoencoder.} We consider a two-layer neural network with tied weights. Given an input $x \in \mathbb{R}^d$ drawn according to \cref{eq:MCM} with $\beta_u=0,\beta_v=1$, the hidden representation is computed as
\begin{equation*}
    h= \sigma(Wx)
\end{equation*}
where $W\in \mathbb{R}^{m\times d}$ and $\sigma(x)$ is an element-wise nonlinearity. We also add a trainable skip connection $\alpha$, so the output is given by
\begin{equation}
    S(x)=-\alpha x -W^{\mathrm{T}}h + b
\end{equation}
where the second layer reuses the transpose of the first-layer weights and $b\in \mathbb{R}^d$ is a learnable bias. This architecture was used to produce panel a) \cref{fig:fig3}, choosing different activation functions for $\sigma$. To train it, we used objective \cref{eq:formula_loss_diff_useful}, the Adam optimizer with learning rate $\eta=10^{-4}$ and a batch-size of $10^2$ data samples per step.

\textbf{\cref{fig:fig3} b-c): $n$-layer Resnet}. Let $x \in \mathbb{R}^d$ be the input drawn according to \cref{eq:MCM} with $\beta_u=0,\beta_v=1$. The network defines a sequence of hidden representations
$\{h^{(k)}\}_{k=1}^{d-1} \subset \mathbb{R}^m$ as follows:
\begin{align}
h^{(1)} &= g\!\left(W_1 x\right), \\
h^{(k)} &= \lambda\, h^{(k-1)} + g\!\left(W_k h^{(k-1)}\right),
\qquad k = 2, \dots, n-1 ,
\end{align}
where $W_1 \in \mathbb{R}^{m \times d}$, $W_k \in \mathbb{R}^{m \times m}$ for $k \ge 2$ are learnable weight matrices,
$g(\cdot)$ is an element-wise nonlinearity, and $\lambda \in \mathbb{R}$ is a fixed residual scaling parameter, which we set to one.
The output of the network is given by
\begin{equation}
f(x) = \alpha x + W_{\mathrm{out}}\, h^{(n-1)},
\end{equation}
where $W_{\mathrm{out}} \in \mathbb{R}^{d \times m}$ is a learnable output weight matrix and $\alpha \in \mathbb{R}$ is a fixed skip-connection coefficient. All linear layers are bias-free. This architecture was used to produce panel b)-c) \cref{fig:fig3}, choosing different activation functions for $\sigma$. To train it, we used objective \cref{eq:formula_loss_diff_useful}, the Adam optimizer with learning rate $\eta=10^{-4}$ and a batch-size of $10^2$ data samples per step.

\subsection{Further experiments on learning the MCM model}

Here we present two further experiments on the MCM model using the same architecture as in \cref{fig:fig3}a), see \cref{app:Details_MCM} for details. We modify the setting used to obtain \cref{fig:fig3}a) in two ways: first, we remove the overparametrization, i.e., we set the width of the autoencoder to $m=1$. We find that the model can no longer recover the spike. In the second experiment, we keep the width of the autoencoder at $m=100$, meaning that the network is overparametrized, but use SGD instead of Adam as an optimization regime. Further, we increase the learning rate to $\eta=10^{-2}$, as SGD can have slower convergence than Adam. We find that using SGD, the same architectures are able to recover the spike as in the case where we trained with Adam. We hence conclude that overparametrization is the key for recovery of the spike. 

\begin{figure*}[t!]
    \includegraphics[width=\linewidth]{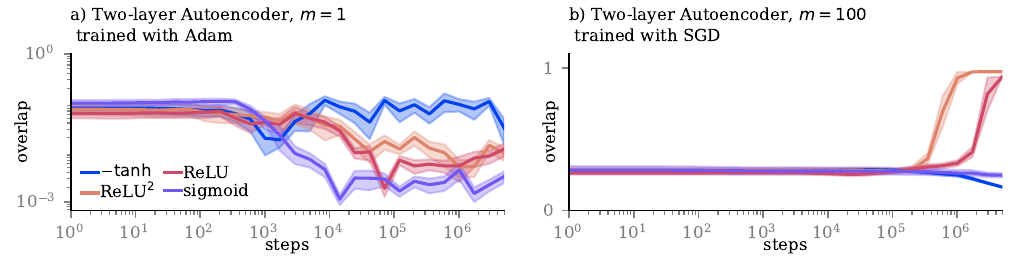}
    \caption{\label{fig:fig_adam_vs_overparam} Normalized overlap of first-layer weights of neural networks of varying  $m$ trained with Adam or SGD the MCM model at $d=100$. All curves are averages over $5$ random initializations of the neural networks.}
\end{figure*}
\begin{figure}
    \centering
    \includegraphics[width=0.8\linewidth]{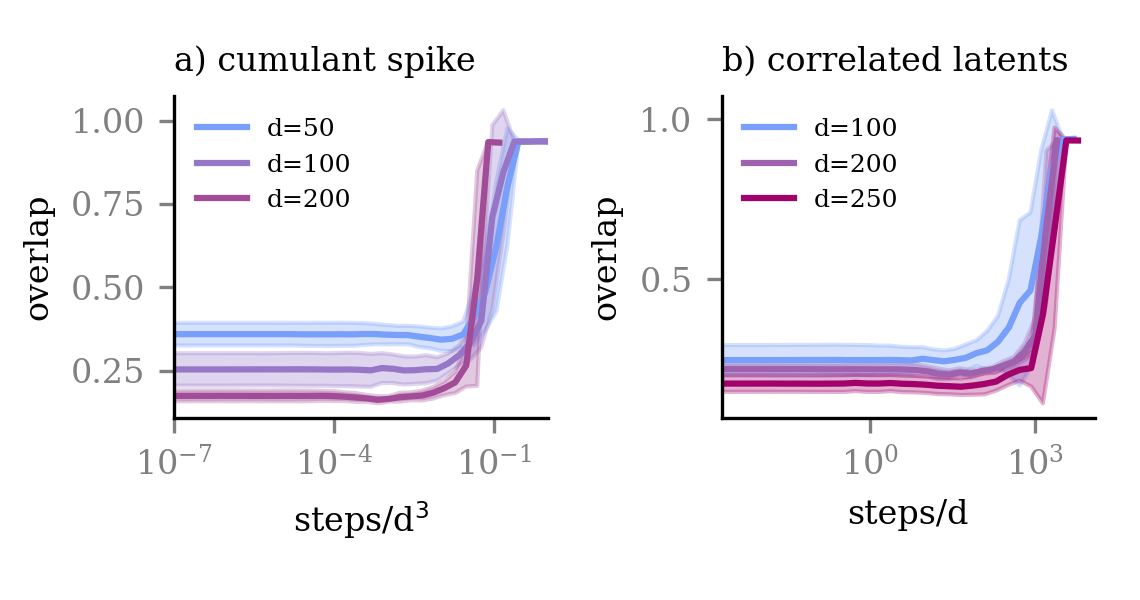}
    \caption{Normalized overlap of first-layer weights with cumulant spike of neural networks trained with SGD on inputs drawn from the mixed cumulant model over training steps, divided by the (cubed) dimension $d$.  a) shows the case of a single cumulant spike, with number of steps scaled by $d^{-3}$. Panel b) shows the case of an MCM model with a covariance and a cumulant spike, which have correlated latents, with training steps scaled by $d^{-1}$. All curves are averages over $5$ random initializations of the neural networks, with two-layer autoencoder architecture and ReLu nonlinearity in the overparametrized regime.}
    \label{fig:placeholder}
\end{figure}

\section{Details on the analysis and proofs}

\subsection{Notation \label{app:notation}}
We made use of asymptotic notation as follows. For two positive functions
$f$ and $g$ from $\mathbb N\to \mathbb R$ (we recall the definitions in the discrete setting, but they trivially extend also for functions defined on $\mathbb R$), we write $f(k)=O(g(k))$ if there exist constants
$C>0$ and $k_0$ such that $f(k)\le Cg(k)$ for all $k\geq k_0$.
We write $f(k)=o(g(k))$ if $f(k)/g(k)\to 0$ as $k\to\infty$.
Analogously, $f(k)=\Omega(g(k))$ if there exists constants $c>0$ and $k_0$
such that $f(k)\geq c g(k)$ for all $k\geq k_0$, while
$f(k)=\omega(g(k))$ if $f(k)/g(k)\to\infty$ as $k\to\infty$.
Finally, $f(k)=\Theta(g(k))$ means that both $f(k)=O(g(k))$ and
$f(k)=\Omega(g(k))$ hold simultaneosly.

\subsection{Hermite polynomials \label{subsection:hermite}}

We recall the definition and a few properties of the Hermite polynomials.
\begin{definition}The Hermite polynomial of degree $m$ is
\begin{equation}h_m(x):=(-1)^m e^{\frac {x^2}2} \frac{\deriv^m}{\deriv x^m}\left(e^{-\frac {x^2}2}\right)
\end{equation}
\end{definition}

There is also a general formula:
\begin{equation} \label{eq:Hermite_general_formula}
h_m(x)=m! \sum_{j=0}^{\lfloor m/2 \rfloor}\frac{(-1)^j}{2^jj!(m-2j)!}x^{m-2j}
\end{equation}
The Hermite polynomials enjoy the following properties (for details see \citet{mccullagh2018tensor}, \citet{szego1939orthogonal} and
\citet{abramowitz1964handbook}):
 \begin{itemize}
     \item they are an orthogonal system with respect to the $L^2$ product weighted with the density of the Normal distribution:
       \begin{equation}
         \frac 1{\sqrt{2\pi}} \int_{-\infty}^\infty h_n(x) h_m(x) e^{-\frac {x^2}2}\deriv x=n!\delta_{m,n};
       \end{equation} 
     \item $h_m$ is a monic polynomial of degree $m$, hence $(h_m)_{m\in\{1,\dots,N\}}$ generates the space of polynomials of degree $\le N$;
     \item the previous two properties imply that the family of Hermite polynomials is an orthogonal basis for the Hilbert space $L^2(\R,\Q)$ where $\Q$ is the normal distribution;
     \item  they enjoy the following recurring relationship
     \begin{equation} \label{eq:HermPol_recur}
         h_{m+1}(x)=x h_m(x)-h_{m}'(x)=xh_m(x)-mh_{m-1}(x),
     \end{equation}
     \end{itemize}

\paragraph{Multivariate case}
In the multivariate $m$-dimensional case we can generalize to Hermite tensors $(H_\amult)_{\amult\in \N^m}$ defined as:
 \begin{equation}H_\alpha(x_1,\dots,x_m)=\prod_{i=1}^m h_{\alpha_i}(x_i)
 \end{equation}
 most of the properties of the one-dimensional Hermite polynomials extend to this case: they form an orthogonal basis of $L^2(\R^m,\mathcal N(0,\id_m)$. We have that:
 \begin{equation}\EE_{x\sim \mathcal N(0,\id)}[ H_\amult(x)H_\bmult(x)]=\amult!\delta_{\amult,\bmult}
 \end{equation}

\begin{definition}[Hermite expansion] \label{def:herm exp}
    Consider a function $f: \mathbb{R} \to \mathbb{R}$ that is square integrable with weight the standard normal distribution $p(x) = (1/\sqrt{2\pi})\; e^{- x^2/2}$. Then, there exists a unique sequence of real numbers $\{c_k\}_{k \in \mathbb{N}}$ called Hermite coefficients, such that:
    \begin{equation*}
        f(x) = \sum_{k = 0}^{\infty} \dfrac{c_k}{k!} h_k(x) \quad \text{ and } \quad c_k := \mathbb{E}_{x \sim \mathcal{N}(0,1)}[f(x) h_k(x)],
    \end{equation*}
    where $h_i$ is the $i$-th probabilist's Hermite polynomial.
\end{definition}

\begin{definition}[Information exponent]\label{def:information_exponent}   Consider a function $f$ that can be expanded with Hermite polynomials with coefficients $(c_i)_i\in {\mathbb N}$. Its information exponent $k^* = k^*(f)$ is the smallest index $k \geq 1$ such that $c_k \neq 0$. 
\end{definition}

The following lemma from \citep{bandeira2020computational} provides a version of the integration by parts technique that is tailored for Hermite polynomials.
\begin{lemma}[Stein lemma]\label{lemma:Stein}
    Let $f: \R^d \to \R^d$ be a continuously differentiable $k$ times function. Suppose that $f$ and all of its partial derivatives up to the $k$-th are bounded by $O(\exp(|y|^\lambda))$ for a $\lambda \in (0, 2)$, then for any $\amult \in \N^d$ such that $|\amult|\le k$
\begin{equation} \label{eq:IntByParts}
\underset{y\sim \mathcal N(0,\id_d)}\E\left[H_\amult(y)f(y)\right] =\underset{y\sim \mathcal N(0,\id_d)}\E\left[\partial_\amult f (y)\right]
\end{equation}
\end{lemma}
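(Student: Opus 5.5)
The plan is to prove the identity componentwise. Each coordinate $f_i$ of $f$ is a real-valued function satisfying the same hypotheses, so it suffices to treat $f:\R^d\to\R$.

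The key tool is the multivariate Rodrigues formula. Writing $\varphi(y)=(2\pi)^{-d/2}e^{-|y|^2/2}$ for the standard Gaussian density, the one-dimensional definition $h_m(x)=(-1)^m e^{x^2/2}\frac{\deriv^m}{\deriv x^m}e^{-x^2/2}$ and the product structure $H_\amult(y)=\prod_i h_{\alpha_i}(y_i)$ give
\begin{equation*}
H_\amult(y)\,\varphi(y)=(-1)^{|\amult|}\,\partial_\amult \varphi(y).
\end{equation*}
This holds because $\varphi$ factorizes over coordinates and each factor only involves the derivative in its own variable. The left-hand side of the lemma therefore becomes $(-1)^{|\amult|}\int_{\R^d} f(y)\,\partial_\amult\varphi(y)\,\deriv y$.

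Next I would integrate by parts $|\amult|$ times, one derivative at a time, by induction on $|\amult|$. The base case $|\amult|=0$ is trivial. For the inductive step, write $\amult=\bmult+e_j$ and use $H_\amult\varphi=-\partial_j\big(H_\bmult\varphi\big)$. This follows from the recurrence $h_{m+1}(x)\,e^{-x^2/2}=-\frac{\deriv}{\deriv x}\big(h_m(x)e^{-x^2/2}\big)$, which is a restatement of \cref{eq:HermPol_recur}. Fix all coordinates except $y_j$ and integrate by parts in $y_j$:
\begin{equation*}
\int_\R f\,H_\amult\varphi\,\deriv y_j=\Big[-f\,H_\bmult\varphi\Big]_{y_j=-\infty}^{y_j=+\infty}+\int_\R \partial_j f\,H_\bmult\varphi\,\deriv y_j .
\end{equation*}
Integrating over the remaining coordinates (Fubini) gives $\E[H_\amult f]=\E[H_\bmult\,\partial_j f]$. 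Since $\partial_j f$ is $C^{k-1}$ with derivatives up to order $k-1$ obeying the same growth bound, and $|\bmult|\le k-1$, the induction hypothesis yields $\E[H_\bmult\,\partial_j f]=\E[\partial_\bmult\partial_j f]=\E[\partial_\amult f]$.

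The main obstacle is the analytic justification rather than the algebra. Three things must be checked.
\begin{itemize}
\item The boundary terms vanish.
\item Every integrand is absolutely integrable, so that Fubini applies and the expectations are finite.
\item The growth hypothesis holds at each stage of the induction.
\end{itemize}
All three reduce to one estimate. For any polynomial $P$ and any $\lambda<2$, we have $|P(y)|\,e^{|y|^\lambda}e^{-|y|^2/2}\to0$ and this quantity is integrable on $\R^d$, because $|y|^\lambda-|y|^2/2\le -|y|^2/4+C_\lambda$. Applied with $P=H_\bmult$ and the bounds on $f$ and its partial derivatives, this kills the boundary terms for almost every value of the frozen coordinates. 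It also provides a dominating function for Fubini. The hypothesis $\lambda<2$ is used exactly here.
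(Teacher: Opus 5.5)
Your proposal is correct and takes essentially the same route as the paper: induction, with one integration by parts per step via the recurrence $h_{m+1}(x)=xh_m(x)-h_m'(x)$, for which the paper simply refers to Bandeira et al.\ for details. What you add is the analytic justification (vanishing boundary terms, Fubini, and the growth bound carrying over to $\partial_j f$), which is exactly where $\lambda<2$ is needed.
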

\begin{proof}
    \ref{eq:IntByParts} can be proved by doing induction on $k$ using \ref{eq:HermPol_recur}, see \citep{bandeira2020computational} for details.
\end{proof}

\begin{corollary}
    Let $u_1,u_2\in \mathbb \mathbb S^{d-1}$, then the following formula holds:
    \begin{equation}
        \label{eq:orthog_gen} \underset{x\sim\mathcal N(0,\id_d)}\EE[h_i(u_1\cdot x) h_j(u_2\cdot x)]=(u_1\cdot u_2)^ii!\delta_{i,j}
    \end{equation}
\end{corollary}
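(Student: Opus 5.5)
The identity is a consequence of the multivariate orthogonality of Hermite tensors, once we pick a suitable orthonormal basis. First, if $u_1=\pm u_2$ or, more generally, in all cases, we complete $u_1$ to an orthonormal pair: write
\[
u_2=\rho\, u_1+\sqrt{1-\rho^2}\, e,
\qquad \rho:=u_1\cdot u_2,
\]
with $e\perp u_1$ a unit vector (any choice if $|\rho|=1$). Setting $y_1=u_1\cdot x$ and $y_2=e\cdot x$, these are independent standard Gaussians by rotation invariance of $\mathcal N(0,\id_d)$. Then $u_2\cdot x=\rho y_1+\sqrt{1-\rho^2}\,y_2$, and the expectation reduces to a two-dimensional Gaussian integral
\[
\EE\!\left[h_i(y_1)\,h_j\!\left(\rho y_1+\sqrt{1-\rho^2}\,y_2\right)\right].
\]

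Next, I apply \cref{lemma:Stein} in dimension $2$ with multi-index $\amult=(i,0)$ and $f(y)=h_j(\rho y_1+\sqrt{1-\rho^2}y_2)$. This $f$ is a polynomial, so the growth hypothesis holds. The lemma gives
\[
\EE[\partial_{y_1}^i f]=\rho^i\,\EE\!\left[h_j^{(i)}(u_2\cdot x)\right].
\]
Using $h_j'=j\,h_{j-1}$ (from \cref{eq:HermPol_recur}), we get $h_j^{(i)}=\frac{j!}{(j-i)!}h_{j-i}$ for $i\le j$, and $0$ for $i>j$. Since $u_2\cdot x\sim\mathcal N(0,1)$, orthogonality against $h_0=1$ yields $\EE[h_{j-i}(u_2\cdot x)]=\delta_{i,j}$. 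Hence the expectation vanishes when $i>j$, vanishes when $i<j$ (because $j-i\ge1$), and equals $\rho^i\, i!$ when $i=j$. This is exactly \cref{eq:orthog_gen}.

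The only point requiring care is that for $i<j$ the Stein step alone does not kill the term; it is the final orthogonality to constants that does. Alternatively, one can symmetrize by swapping the roles of $u_1$ and $u_2$. There is no real obstacle beyond checking the derivative identity for Hermite polynomials, which follows directly from the recursion stated earlier.
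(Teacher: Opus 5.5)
Your proof is correct and follows the paper's route: the paper's proof only says the identity follows from \cref{lemma:Stein}, and you fill this in by passing to the orthonormal pair $(u_1,e)$, applying Stein's lemma with multi-index $(i,0)$, and then using $h_j'=j\,h_{j-1}$ together with orthogonality to $h_0$. The one degenerate case, $d=1$ (where no unit vector $e\perp u_1$ exists), causes no trouble: there $|\rho|=1$, so you can drop $y_2$ and run the same argument in one dimension.
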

\begin{proof}
    It follows from the application of \cref{lemma:Stein}
\end{proof}

 \subsection{Derivation of the loss formula \label{app:model}}

 We consider the setting of \cite{biroli_generative_2023}, modeling the diffusion process for $t\to x(t)\in \R^d$, with $x(0)=a\sim P_0$, the target distribution, that we want to learn how to sample from. 
The diffusion process has the following form:
\begin{equation} \label{eq:diffusion_process}
\deriv x(t)=-x\deriv t+\deriv \mathcal W_t,
\end{equation}
where $\mathcal W_t \in \R^d$ is a $d$ dimensional Wiener process. The solution at time $t$ can be written in distribution as:
\begin{equation} \label{eq:xtsol_formula}
x(t)=x(0) e^{-t}+\sqrt{1-e^{-2t}}z
\end{equation}
where $z\sim \mathcal N(0,\id_d)$. So, defining $\Delta_t=1-e^{-2t}$
The density at time $t$ is given by:
\begin{equation} \label{eq:Ptdiffusion}
P_t(x)=\int \deriv a \, P_0(a) \frac{1}{\left(2\pi \Delta_t\right)^{d/2}} \exp\left(-\frac12\frac{(x-ae^{-t})^2}{\Delta_t}\right).
\end{equation}

A key quantity that needs to be introduced is the \emph{score} of $P_t$, that appears in the equation for the backward process. Knowing the score, or being able to approximate it, allows to revert the process and, starting from samples of a standard Gaussian it gives the recipe on how to transform them into samples of $P_0$. The score is defined as:
\begin{equation}
        \mathcal F_i(x,t)=\frac{\partial \log P_t(x)}{\partial x_i}=-\frac{x_i-\EE[a_i|x(t)=x]e^{-t}}{\Delta_t},
\end{equation}
where the last equality, called \emph{Tweedie formula}, is at the core of the feasibility of diffusion models. It gives a recipe on how to approximate the score via empirical averages of the noised process.
The objective then becomes learning $\mathcal F$. To do this, one  can build MSE for a collection of fixed time intervals, let us denote $\mathcal S_t^w(x)$ the approximated score that depends on weight $w$. The loss function at time $t$ will be
\begin{align*}
    \mathcal L_t(w)=&\frac12\EE_{x}\left[\norm{S^w_t(x)+\frac{x-\EE[a|x(t)=x]e^{-t}}{\Delta_t}}^2\right]
    \\ \overset{\cref{eq:xtsol_formula}}{=}&\frac12\EE_{a,z}\left[\norm{S^w_t(x)+\frac{ae^{-t}+\sqrt{\Delta}z-\EE[a|x(t)=x]e^{-t}}{\Delta_t}}^2\right]\\
    =&\frac12\EE_{a,z}\left[\norm{S^w_t(x)+\frac{z}{\sqrt{\Delta_t}}}^2+ \norm{\frac{ae^{-t}-\EE[a|x(t)=x]e^{-t}}{\Delta_t}}^2+2\frac{z}{\sqrt{\Delta_t}} \cdot \frac{ae^{-t}-\EE[a|x(t)=x]e^{-t}}{\Delta_t} \right],
\end{align*}
where the double product with $S_t^w$ does not appear because $\EE_{x}[S^w_t(x)\EE_{a|x}\left[\left(a-\EE[a|x(t)=x]\right)\right]]=0$.
Note that the second and third addend are independent of $w$, hence the troublesome term $\EE[a|x(t)=x]$ does not appear in the gradient and the effective loss function is:
\begin{equation} 
L_t(w)=\frac12\EE_{a,z}\left[\norm{S^w_t(a e^{-t}+\sqrt{\Delta_t} z)+\frac{z}{\sqrt{\Delta_t}}}^2\right] +C.
\end{equation}
This quantity can be well approximated having just samples from $P_0$, which allow to estimate the integral over $a$.

\subsection{The example of the spiked cumulant model}\label{app:score}
 To provide intuition on the forward and backward processes described in \cref{sec:analysis}, we provide here the explicit formulas in the case in which $P_0$ is the spiked cumulant model from \cite{szekely_learning_2024}. We can choose the case $x(0)= \nu v+z$ with $\nu\sim\text{Rademacher}(1/2)$, $v$ is the norm 1 spike and $z$ is a $d-1$-standard Gaussian, in the space orthogonal to $v$, i.e. $z\sim \mathcal N(0,\id_d-vv^\top)$. Then from \cref{eq:Ptdiffusion}, $P_t$  has the following expression:
\begin{align*}
P_t(x)
&=\frac{1}{(2\pi)^{(d-1)/2}}\exp\left(-\frac 12x^\top_{\perp v} x_{\perp v}\right) \frac{1}{2\left(2\pi \Delta_t\right)^{1/2}} \left[\exp\left(-\frac12\frac{(x_v-e^{-t})^2}{\Delta_t}\right)+
\exp\left(-\frac12\frac{(x_v+e^{-t})^2}{\Delta_t}\right)\right]\\
&=\frac{1}{(2\pi)^{(d-1)/2}}\exp\left(-\frac 12x_{\perp v}^\top x_{\perp v}\right) \frac{\exp\left(-\frac12\frac{x_v^2+e^{-2t}}{\Delta_t}\right)}{\left(2
\pi \Delta_t\right)^{1/2}} \cosh\left(\frac{x_ve^{-t}}{\Delta_t}\right),
\end{align*}
where $x=x\cdot v v+(x-x\cdot vv)=x_vv+x_{\perp v}$.

Hence the score is:
\begin{align}
   \notag \mathcal F(x,t)&=-x_{\perp v}-\frac {x_v}{\Delta_t}v+\frac {e^{-t}}{\Delta_t} v\tanh\left(\frac{x_ve^{-t}}{\Delta_t}\right)\\
 \label{eq:score_spiked}  &=-x-\frac {e^{-t}}{\Delta_t} v\left(e^{-t}x_v-\tanh\left(\frac{x_ve^{-t}}{\Delta_t}\right)\right).
\end{align}

\subsection{Projected SGD dynamics \label{app:computations_projected_SGD}}
We will now focus on the dynamics of projected SGD, so we can assume $||w||=1$. Denoting for brevity by $x_w:=x\cdot w$:
\begin{align*}
-\nabla_{sph}\mathcal L_t&= -(\id - ww^\top)\EE_{x,z}\Bigg[\sigma'(x_w)\sigma(x_w)x+\sigma^2(x_w)w+\sigma(x_w)x+\\&\quad + \sigma'(x_w)x_w x- \frac{1}{\sqrt{\Delta_t}}\left(\sigma(x_w)z+\sigma'(x_w)z_w x\right)\Bigg]\\
&=(\id - ww^\top)\EE_{x}\left[x \left(\sigma''(x_w)-\sigma'(x_w)\sigma(x_w)-\sigma(x_w)- \sigma'(x_w)x_w\right)\right]
\end{align*}
where the second equality all the terms proportional to $w$ have been canceled by the factor $\id-ww^\top$ and the terms depending on $z$ can be reduced to terms involving just $x$ through Stein lemma (as detailed in lemma F.1 in \cite{shah2023learningmixturesgaussiansusing}):
\begin{align*}
\label{eq:stein-lemma}
    \EE_{z\sim \mathcal N(0,\id),x}\left[\frac{1}{\sqrt{\Delta_t}}\sigma(x_w)z\right]&=\EE_{x}\left[\sigma'(x_w)w\right] \\
    \EE_{z\sim \mathcal N(0,\id),x}\left[   \frac{1}{\sqrt{\Delta_t}} \sigma'(x_w)z_w x\right]&= \EE_{x}\left[\sigma''(x_w)||w||^2 x+\sigma'(x_w)w\right]
\end{align*}
Then introducing $L(v\cdot x)$ and defining
\[
F_\sigma(x_w):= \sigma''(x_w)-\sigma'(x_w)\sigma(x_w)-\sigma(x_w)- \sigma'(x_w)x_w
\]
we can expand in Hermite orthonormal basis:
\begin{align*}
L(v\cdot x)&=\sum_{i=0}^\infty c_i^Lh_i(x_v)\\
F_\sigma(x_w)&=\sum_{j=0}^\infty c_j^Fh_j(x_w)
\end{align*}
and get:
\begin{align*}
-\nabla_{sph}\mathcal L_t&=(\id - ww^\top)\underset{x\sim \mathcal N(0,\id)}\EE\left[x \left(\sum_{i=0}^\infty c_i^Lh_i(x_v)\right)\left(\sum_{j=0}^\infty c_j^Fh_j(x_w)\right)\right]\\
&=(\id - ww^\top)\left[\left(\sum_{i=1}^\infty c_i^Lc^F_{i-1}(v\cdot w)^{i-1}\right)v+\left(\sum_{j=1}^\infty c_j^Fc^L_{j-1}(v\cdot w)^{j-1}\right)w\right]\\
&=\left(\sum_{i=1}^\infty c_i^Lc^F_{i-1}(v\cdot w)^{i-1}\right)(1-v\cdot w)v
\end{align*}
So, let $\alpha:=v\cdot w$ in the early stages of learning the dynamics to reach weak recovery are described by:
\begin{equation}
    -\nabla_{sph}\mathcal L_t=c^L_{k^*}c^F_{k^*-1}\alpha^{k^*-1} v+O(\alpha^{k^*})
    \end{equation}
where $k^*$ is the first non zero term of the series. In the following we consider some examples of applications of \cref{prop:single-index-positive}.

\begin{proof}[Proof of  \cref{prop:single-index-positive} and \cref{prop:single-index-negative}] These propositions can be seen as corollaries of theorems 1.3 and 1.4 from \cite{benarous2021online}. \Cref{assumption:A,assumption:B} verify the core requirements. The only catch is that all the assumptions are not verified on the loss function, but directly on the spherical gradient. However the whole reasoning detailed in \cite{benarous2021online} never relies on computations of the loss function, but only of its gradient, hence we can apply the proof to our setting. 
\end{proof}
\subsubsection{Spiked Wishart}
In the case of spiked Wishart model, $k^*=2$, so we can take $\sigma=-id$:
\[
F(x_w)=x_w
\]
and we get that the projected SGD reaches weak recovery in $d\log d$ sample complexity 
\subsubsection{Spiked cumulant}
We choose $\sigma$ to match \eqref{eq:score_spiked}:
\[\sigma(x_v)=\frac{e^{-t}}{\Delta_t}\left(e^{-t}x_v-\tanh\left(\frac{e^-t}{\Delta_t} x_v\right)\right)
\]
and simulations confirm that projected SGD works in this regime reaching weak recovery in $d^3$ samples.
Note that the coefficients depend exponentially on diffusion time $c_4^L=e^{-4t}-3e^{-2t}$
 \subsection{Mixed cumulant model \label{app:MCM}}

\begin{proof}[Proof of \cref{prop:MCM}]
    The proof relies on verifying the precise hypothesis of propositions 3 and 4 in \cite{bardone_sliding_2024}, so that all the argument can be replicated in the exact same way. The starting point is that the population loss expansion 
     \begin{equation}\label{eq:MCMpopuloss_expansion}
-\nabla_{\mathrm{sph}}\mathcal L(\alpha_u,\alpha_v)=\sum_{k=1}^\infty\sum_{i=0}^k c^F_{k-1}c^L_{i,k-i}\left( \alpha_u^{i-1}\alpha_v^{k-i}u+\alpha_u^{i}\alpha_v^{k-i-1}v  \right)
\end{equation}
    coincides with the population loss from \cite{bardone_sliding_2024}, eq. (25), with a different naming of the coefficients: in their notation $kc^\sigma_k$ corresponds to  $c^F_{k-1}$ in our notation. Hence \cref{assumption:mixed cumulant} verifies the requirements of Assumption 1 in \cite{bardone_sliding_2024}. The only term that in principle could behave differently is the directional noise martingale $H_d(x,w):= \mathscr L-\mathcal L$. However $L_t$ is the likelihood ratio of a sub-Gaussian random variable, and $H$ is a Lipschitz transformation, so $H(x, w)$, with $||w||=1$ and $x\sim \mathbb P_t$ is sub-Gaussian. Hence requirements in \cref{assumption:B}, which were the same as the ones needed in \cite{bardone_sliding_2024}, are satisfied.
    Hence, we can apply propositions 3-4 from \cite{bardone_sliding_2024} and conclude the proof.
\end{proof}
\subsection{Additional details for SGD without projection \label{app:SGD_without}}
We now do not consider the restriction to $||w||=1$ and the spherical gradient, SGD gradient has the form
\begin{align*}
-\nabla\mathcal L_t&= -\EE_{x,z}\left[\sigma'(x_w)\sigma(x_w)||w||^2x+\sigma^2(x_w)w+\sigma(x_w)x+ \sigma'(x_w)x_w x- \frac{1}{\sqrt{\Delta_t}}\left(\sigma(x_w)z+\sigma'(x_w)z_w x\right)\right]\\
&=\EE_{x}\left[x \left(\underbrace{\sigma''(x_w)||w||^2-\sigma'(x_w)\sigma(x_w)||w||^2-\sigma(x_w)- \sigma'(x_w)x_w}_{\tilde F_\sigma}\right)+w\left(\underbrace{2\sigma'(x_w)-\sigma^2(x_w)}_{G_\sigma}\right)\right]
\end{align*}

  More in general we can now expand the likelihood ratio $L$,$\tilde F_\sigma$ and $G_\sigma$ in Hermite basis. Note that now $||w||$ is not anymore constant equal to 1, as in the projected SGD case, so we will expand with respect to $\hat w=w/||w||$ and $v$ to get:
 \begin{align*}
     -\nabla\mathcal L_t&=\EE_{x}\left[x \tilde F_\sigma(x_w,||w||) 
     +wG_\sigma(x_w)\right]\\
     &=\underset{x\sim \mathcal N(0,\id)}\EE\Bigg[x \left(\sum_{i=0}^\infty \frac{c_i^L}{i!}h_i(x_v)\right)\left(\sum_{j=0}^\infty \frac{c_j^{\tilde F}(||w||)}{j!}||w||^jh_j(x_{\hat w})\right)+\\&\quad + w \left(\sum_{i=0}^\infty \frac{c_i^L}{i!}h_i(x_v)\right)\left(\sum_{k=0}^\infty \frac{c_k^G(||w||)}{k!}||w||^kh_k(x_{\hat w})\right)\Bigg]\\
&=\left[\left(\sum_{i=1}^\infty \frac{c_i^Lc^{\tilde F}_{i-1}}{(i-1)!}(v\cdot w)^{i-1}\right)v+\left(\sum_{j=1}^\infty \frac{c_j^{\tilde F}c^L_{j-1}}{(j-1)!}(v\cdot w)^{j-1}+\sum_{k=0}^\infty \frac{c_k^Gc^L_{k}}{k!}(v\cdot w)^{k}\right)w\right]
 \end{align*}
 Where we applied Stein's lemma multiple times and the Hermite decomposition. The coefficients $(c_i^{\tilde F})_{i\in \mathbb N}$ and $(c_j^G)_{j\in \mathbb N}$ depend on $||w||$ and are defined as:
 \begin{align}
     ||w||^k c_k^{\tilde F}(||w||)&=\mathbb E[\tilde F_\sigma(w\cdot x) h_k(\hat w\cdot x)] \overset{Stein}{=} ||w||^k \mathbb E\left[\partial^k\tilde F_\sigma(w\cdot x)\right]\\
      ||w||^k c_k^{G}(||w||)&=\mathbb E[G_\sigma(w\cdot x) h_k(\hat w\cdot x)] \overset{Stein}{=} ||w||^k \mathbb E\left[\partial^kG_\sigma(w\cdot x)\right]
 \end{align}
 We now approximate at leading order in $\alpha:=w\cdot v$, which at the beginning of learning is small. Recalling that $k^*$ denotes the diffusion information exponent, and $c_0^L=1$, we get
 \begin{equation} \label{eq:loss_formula_leading_terms}
      -\nabla\mathcal L_t(w)= \left(c_1^{\tilde F}+c_0^G\right)w+O(\alpha)w+\frac{c_{k^*}^Lc_{k^*-1}^{\tilde F}\alpha^{k^*-1}}{(k-1)!}v+O(\alpha^{k^*})
 \end{equation}
 So the dynamics at leading order are governed by $\Lambda: =c_1^{\tilde F}(||w||)(1+c_2^L)+c^G_0$ (note that the coefficient of $c_1^{\tilde F}$ comes from both terms of \cref{eq:loss_formula_leading_terms} ). If $\Lambda<0$ in a neighborhood of $0$, then $w=0$ is an attracting fixed point meaning that this analysis is already able to characterize GD dynamics starting from random initializations (provided the basin of attraction includes the initialization), otherwise if $\Lambda>0$ it is a repulsive fixed point, and the dynamics push away from the region in which it is possible to expand the loss, hence analysis of this case cannot be done through this method.

\begin{proof}[Proof of \cref{prop:single-spike-nonspherical}]
An iteration of the SGD can be written as:
\[
w_{\tau+1}=w_\tau-\eta\left(\nabla\mathcal L(w_\tau ) +\nabla H(w_\tau,x_\tau)\right)
\]
using the expansion \cref{eq:popu_loss_vanillaSGD_expansions} we get
\begin{equation} \label{eq:recursion}
\begin{cases}
\alpha_{\tau+1}=\alpha_\tau+\eta\left(\alpha_\tau\Lambda(||w_\tau||)+E_\tau \alpha_\tau+v\cdot \nabla H(w_\tau,x_\tau)\right)\\
w_{\tau+1}=(1+\eta\Lambda (||w_\tau||))w_\tau+\eta \vec R_\tau+\eta \nabla H(w_\tau,x_\tau)
\end{cases}
\end{equation}
where $|E_\tau|\le L \alpha_\tau$ for some constant $L$ and $\vec R_\tau$ is a vectorized analogous from which we do not factor $\alpha_\tau$: $\norm{\vec R_\tau}\le L \alpha$. Now call $\gamma_\tau:=1+\eta \Lambda(||w_\tau||)+\eta E_\tau$ and $\delta_\tau=1+\eta \Lambda(||w_\tau||)$, then it can be easily verified by induction that the recursive relations \eqref{eq:recursion} implies the following explicit formulas:
\begin{equation} \label{eq:explicit}
\begin{cases}
\alpha_\tau=\alpha_0\prod_{i=0}^{\tau-1} \gamma_i+\eta\sum_{i=0}^{\tau-1}v\cdot \nabla H(w_i,x_i) \prod_{j=i+1}^{\tau-1}\gamma_j\\
w_{\tau+1}=w_0\prod_{i=0}^{\tau-1} \delta_i+\eta\sum_{i=0}^{\tau-1}\left(\vec R_i+ \nabla H(w_i,x_i)\right) \prod_{j=i+1}^{\tau-1}\delta_j
\end{cases}
\end{equation}
We will now prove the statement by induction: so assume that all $w_i$ and $\alpha_i$ satisfy the statement up to $i=t$ and we prove it for $t+1$. So we will use that $\gamma_i,\delta_i\le 1-\eta k_0$ and $\alpha_i \le \alpha_0\to_d 0$.
Now we estimate the noise terms that luckily in \cref{eq:explicit} appear as  geometrically dampened martingale: 
h\[\vec M_\tau=\eta \sum_{i=0}^{\tau-1}\nabla H(w_i,x_i) \prod_{j=i+1}^{\tau-1}\gamma_\tau.\]
Using Doob inequality, together with assumption we get:
\begin{equation} \label{eq:Doob}
\mathbb{P}\left(\sup_{u \in \mathbb S^{d-1}}\sup_{\tau\le n} u\cdot \vec M_\tau\ge r \right)\le \frac{\text{Var}(u\cdot M_n)}{r^2}=\frac{\eta^2C_1 ||w_0||^2}{r^2}\sum_{i=0}^n(1-\eta k_0)^{2i}=\frac{\eta C_1 ||w_0||^2}{r^2}\frac{1-(1-\eta k_0)^{2n}}{2k_0-\eta k_0^2}
\end{equation}
where we used the fact that $\text{Var}(\nabla H^2(x,w)\le C ||w||^2$ that can be quickly checked to be true.
So pick a sequence of $r_d\to 0$ as $d\to \infty$ and condition to the complementary event in $\cref{eq:Doob}$. Then, we can take $d$ large enough so that $\alpha_0\le \frac{k_0}{2L}$, so that $\gamma_\tau\le 1-\eta\frac{k_0}2=\bar \gamma$, plugging into the first equation in \cref{eq:explicit} we find :
\[
\alpha_{\tau+1}\le\alpha_0\bar \gamma ^{\tau+1}+r_d
\]
So by taking $d$ large so that $r_d$ becomes small enough we have verified the requirement on $\alpha_{\tau+1}$.
We can turn now to verify the inductive statement for $w_{\tau+1}$. Applying the inductive hypotheses on \cref{eq:explicit} we get:
\begin{align}\label{eq:normwtau+1_estim}
||w_{\tau+1}||\le ||w_0|| \bar \delta ^{\tau+1}+\eta\sum_{i=0}L\alpha_i \bar\delta^{\tau-i} +\norm {\eta\sum_{i=0}^{\tau-1}\nabla H(w_i,x_i) \bar \delta^{\tau-i}} 
\end{align}
we can estimate the middle term by applying the inequality for $\alpha_i$ and getting (taking again $d$ large so that $r$ is small enough):
\[
\eta\sum_{i=0}L\alpha_i \bar \delta^{\tau-i} \le \underbrace{\eta L\alpha_0 (\min \bar \gamma,\bar \delta)^\tau}_{\to 0}+L\frac{r}{k_0}
\]
Finally the last term in \cref{eq:normwtau+1_estim} can be estimate by Chebyshev inequality since
\[\text{Var}\left(\eta\sum_{i=0}^{\tau-1}\nabla H(w_i,x_i) \bar \delta^{\tau-i}\right)\le C\frac{\eta^2}{1-\bar \delta^2}\le C\frac{||w_0||^2\eta}{k_0}\] which tends to 0 as $d\to \infty$. Note that we used the inductive hypothesis to estimate $\text {Var}(\nabla H(x,w))\le C ||w_0||$. So we can establish that:
\begin{equation}\label{eq:chebyshev}
\mathbb P\left(\norm {\eta\sum_{i=0}^{\tau-1}\nabla H(w_i,x_i) \bar \delta^{\tau-i}} \ge r_d\right)\le C\frac{||w_0||^2\eta}{r_d^2 k_0} 
\end{equation}

This concludes the verification of the inductive step. Note that up to now we were conditioning to be in the complementary of \cref{eq:Doob} and of \cref{eq:chebyshev}, but note that if we take $r_d$ to go to 0 slowly enough so that $\frac{\eta}{r^2}\to0$, then the probability of the event in \cref{eq:Doob} and \cref{eq:chebyshev} goes to 0 and the proof is concluded.

\end{proof}
\subsection{Trainable intensity of the skip connection}
Let us see what changes if we add a weight $b$ that multiplies the intensity of the skip connection and train it.

\begin{align}
&-\nabla_w\mathcal L_t=\\-&=\EE_{x,z}\left[\sigma'(x_w)\sigma(x_w)||w||^2x+\sigma^2(x_w)w+b \sigma(x_w)x+ b \sigma'(x_w)x_w x- \frac{1}{\sqrt{\Delta_t}}\left(\sigma(x_w)z+\sigma'(x_w)z_w x\right)\right]\\ 
&=-\EE_{x}\left[x \left(\underbrace{\sigma''(x_w)||w||^2-\sigma'(x_w)\sigma(x_w)||w||^2-b\sigma(x_w)- b\sigma'(x_w)x_w}_{F_\sigma}\right)+w\left(\underbrace{2\sigma'(x_w)-\sigma^2(x_w)}_{G_\sigma}\right)\right]
\end{align}
We also have the derivative with respect to $b$:
\begin{align}
-\nabla_b\mathcal L_t&=-\EE_{x,z}\left[\sigma(x_w)x_w+b||x||^2- \frac{1}{\sqrt{\Delta_t}}\left(x^\top z\right)\right]\\ 
\label{eq:deriv_b_populoss}&=-\EE_{x,z}\left[\sigma(x_w)x_w+b||x||^2- 1\right]
\end{align}

Since $\text{Var} (x)=1$, $(w,b)=(0,1)$ is a critical point, and around it the dynamics will be similar to the previously examined case: from \cref{eq:deriv_b_populoss} we can see that the dynamics attract towards $b=1$.

\end{document}